\def\P\text{P}
\newcommand{\dif}{\ensuremath{\mathrm{d}}}
\newcommand{\T}{\ensuremath{\mathrm{\scriptscriptstyle T}}}
\newcommand{\Thetab}{\ensuremath{\bm\Theta}}
\newcommand{\Sigmab}{\ensuremath{\bm\Sigma}}
\newcommand{\Psib}{\ensuremath{\bm\Psi}}
\newcommand{\thetab}{\ensuremath{\bm\theta}}
\newcommand{\mub}{\ensuremath{\bm\mu}}
\newcommand{\psib}{\ensuremath{\bm\psi}}
\DeclareMathOperator{\E}{E}
\newlength\aftertitskip     \newlength\beforetitskip
\newlength\interauthorskip  \newlength\aftermaketitskip
\newtheorem{theorem}{Theorem}
\newtheorem{example}{Example}
\def\XXint#1#2#3{{\setbox0=\hbox{$#1{#2#3}{\int}$ }
\vcenter{\hbox{$#2#3$ }}\kern-.6\wd0}}
\title{A parallelizable model-based approach for marginal and multivariate clustering}
\author{
  Miguel de Carvalho\\
  School of Mathematics\\
  University of Edinburgh\\
  EH93FD, Edinburgh, UK\\
  \texttt{miguel.decarvalho@ed.ac.uk} \\
   \And
  Gabriel Martos\\
  Departamento de Matemática y Estadística\\
  Universidad Torcuato Di Tella\\
  Buenos Aires, Argentina\\
  \texttt{gmartos@utdt.edu} \\
   \And
  Andrej Svetlo\v{s}\'{a}k\\
  School of Mathematics\\
  University of Edinburgh\\
  EH93FD, Edinburgh, UK\\
  \texttt{andrej.svetlosak@ed.ac.uk} \\
}
\begin{document}
\maketitle

\begin{abstract}
  This paper develops a clustering method that takes advantage of the sturdiness of model-based clustering, while attempting to mitigate some of its pitfalls. First, we note that standard model-based clustering likely leads to the same number of clusters per margin, which seems a rather artificial assumption for a variety of datasets. We tackle this issue by specifying a finite mixture model per margin that allows each margin to have a different number of clusters, and then cluster the multivariate data using a strategy game-inspired algorithm to which we call Reign-and-Conquer. Second, since the proposed clustering approach only specifies a model for the margins---but leaves the joint unspecified---it has the advantage of being partially parallelizable; hence, the proposed approach is computationally appealing as well as more tractable for moderate to high dimensions than a `full' (joint) model-based clustering approach. A battery of numerical experiments on artificial data indicate an overall good performance of the proposed methods in a variety of scenarios, and real datasets are used to showcase their application in practice.
\end{abstract}

\keywords{Cluster analysis; Parallel algorithm; Model-based clustering; Similarity-based clustering; Unsupervised learning.}

\section{Introduction}\label{introduction}
\noindent \textbf{Context and Motivation} \\
Clustering is an unsupervised learning approach for the task of partitioning data into meaningful subsets. The huge literature on cluster analysis is difficult to survey in a few sentences, but a concise description of well-known approaches is offered by \citet{friedman2009}, \citet{everitt2011}, and \citet{king2014}. Examples of mainstream methods for clustering data include model-based \citep{bouveyron2019}, similarity-based \citep{macqueen1967, kaufman1987}, and hierarchical clustering \citep[][Section~14.3]{friedman2009}
. In this paper we propose a novel model-based approach for cluster analysis that lies at the interface of model-based clustering (i.e.,~via mixture models) and similarity-based clustering (i.e.,~via $\mathcal{K}$-means and $\mathcal{K}$-medoids). The proposed approach aims to benefit from the flexibility and soundness of model-based clustering, while attempting to mitigate Pitfalls~1 and 2 below. Model-based clustering is a fast-evolving and intradisciplinary research topic as can be seen from the recent Handbook on Mixture Analysis \citep{fruhwirth2019} as well as the survey papers of \cite{melnykov2010}, \cite{mcnicholas2016}, \cite{gormley2023}, and the references therein.\\

\noindent \textbf{Pitfall~1: The ``Single $\mathcal{K}$ Problem''} \\ 
The idea of thinking of a cluster as a component of a mixture model has a long tradition in cluster analysis, that has its roots in Tiedeman's work in 1955 \citep{mcnicholas2016}. Despite the resilience and flexibility of this paradigm, it is often unnoticed that multivariate model-based clustering may induce the same number of clusters on each margin. For many applied contexts of interest it is however unnatural to believe that all margins should have exactly the same number of components---and hence the same number of marginal clusters. To appreciate this issue, let's revisit the Gaussian finite-mixture model,
\begin{equation}\label{mvn}
  f(\bm{x}) = \sum_{k = 1}^{\mathcal{K}} \pi_k \phi_d(\bm{x}; \mub_k, \Sigmab_k), \quad \bm{x} = (x_1, \dots, x_d), 
\end{equation}
where $\phi_d(\bm{x}; \mub_k, \Sigmab_k)$ is the density function of a $d$-dimensional multivariate Normal distribution with mean $\mub_{k} = (\mu_{k, 1}, \dots, \mu_{k, d})$ and variance-covariance matrix $\Sigmab_k$, with diagonal elements $(\sigma^{2}_{k, 1}, \dots, \sigma^{2}_{k, d})$. The marginal distributions stemming from \eqref{mvn} are 
\begin{equation}\label{margins}
  f_j(y) = \sum_{k = 1}^{\mathcal{K}} \pi_k \phi(y; \mu_{k, j}, \sigma^2_{k, j}),
\end{equation}
for $j = 1, \dots, d$. As can be seen from \eqref{margins} model-based clustering as in \eqref{mvn} implies that all margins have $\mathcal{K}$ clusters per margin, except if $\mu_{k, j} = \mu_{k', j}$ and $\sigma_{k, j} = \sigma_{k', j}$ for some $k'$ and $k$. Since in practice it is challenging to learn from data if this (i.e., $\mu_{k, j} = \mu_{k', j}$ and $\sigma_{k, j} = \sigma_{k', j}$) holds exactly, we will refer to this challenge as the ``single $\mathcal{K}$ problem.''  \\

\noindent \textbf{\noindent \textbf{Pitfall~2: Curse of Dimensionality, with $O(d^2)$ Parameters as $d \to \infty$}} \\
The Gaussian mixture model in \eqref{mvn} has $(\mathcal{K} - 1) + \mathcal{K}d + \mathcal{K}d(d + 1) / 2$ parameters, and hence the number of parameters increases quadratically with $d$. This shortcoming is well known to limit the scope of application of model-based clustering on high-dimensional data \citep{bouveyron2014}. Some approaches have been developed with the aim of providing a more parsimonious specification, and hence as byproduct this paper will also contribute to that literature. A key paper on parsimonious model-based clustering is that of \cite{mcnicholas2008} who suggest a latent Gaussian model that can be regarded as a mixture of factor models. \vspace{-0.1cm}\\ 

\noindent \textbf{Summary of Main Contributions} \\ 
The main contributions of this paper are as follows: 
\begin{itemize}
\item We pioneer the development of a model-based solution for the ``single $\mathcal{K}$ problem'' outlined in \eqref{margins}, by specifying an individual finite mixture models for each of the margins, but making no assumptions on the joint distribution. The sample space is then partitioned via a strategy game-inspired algorithm, which can be used for clustering data, both marginally as well as in a multivariate fashion.
\item We develop a computationally appealing and partially parallelizable model-based approach that bypasses the need to learn about $\mathcal{K}d(d+1)/2$ parameters used in the covariance matrices $\boldsymbol\Sigma_1, \dots, \boldsymbol\Sigma_{\mathcal{K}}$ required for a `full' (joint) Gaussian model-based clustering approach.
\item The proposed data-driven approach for partitioning the sample space, automatically sieves regions that only have a residual amount of mass---via a  minimum entry-level requirement that is specified by the user or set in a data-driven manner. In addition, we assess numerically the proposed methodologies and ascertain the reliability of their clustering performance in a battery of numerical experiments.
\item As a byproduct, the proposed method contributes to the literature on game-inspired clustering approaches that followed from the seminal paper of \cite{bulo2009} \citep[e.g.,][]{hou2022}. As will be shown below the proposed approach differs however significantly from that of the previous paper---both in terms of scope (the focus of \citeauthor{bulo2009} is on hypergraph clustering) as well as on the specificities of the game underlying the proposed clustering approach.
\end{itemize}
\noindent \textbf{Structure and Organization of this Paper}\\
The remainder of this paper unfolds as follows. In Section~\ref{model} we introduce the probabilistic framework underlying the partition of the sample space which will be the building block of the proposed clustering approach to be introduced in  Section~\ref{sample}. Section~\ref{gtheory} outlines a conceptualization of a variant of the proposed partitioning approach by reinterpreting it as a strategy game. Experiments with artificial and real data are conducted in Sections~\ref{numerical} and \ref{applications}, respectively. Final observations closing remarks are given in Section~\ref{closing}.

\section{{Reign-and-Conquer Partitioning}}\label{model}

\subsection{\large{{The Probabilistic Framework}}}\label{marginal}
A key goal in this section is
to devise a partition of the sample space of the joint distribution that
is meaningful in a sense to be made more clear below.
The proposed framework entails three steps, and to streamline the presentation we
first focus on the bivariate setting. Comments on the mutivariate
extension are given in Section~\ref{multivariate}, and
Section~\ref{gtheory} outlines a game-theoretical variant of the
proposed approach. This section does not yet consider data nor
estimation, it rather focuses on a probabilistic setup for
partitioning a sample space; comments on learning from data based on
the principles below are given in Section~\ref{sample}. Here and below, no
assumption whatsoever is made on the joint density, and we model each
margin as a mixture model.  Keeping in mind that any density can be
approximated by a mixture of Normals, given enough components, the
latter assumption is relatively mild.

\subsubsection*{Step~1: Margins \strut \hfill \footnotesize (Model-Based Clustering)}
Let $X \sim f_{X}$ and $Y \sim f_{Y}$, where
\begin{equation}\label{mix}
  f_{X}(x \mid K, \Thetab) = \sum_{k = 1}^{K} \pi_k \, p(x \mid \thetab_k), \qquad 
  f_{Y}(y \mid L, \Psib) = \sum_{l = 1}^{L} \omega_l \, q(y \mid \psib_l). 
\end{equation}
Here, $p$ and $q$ are density functions, with parameters $\Thetab = (\thetab_1, \dots, \thetab_K)$ and $\Psib = (\psib_1, \dots, \psib_L)$; in addition $K$ and $L$ are the number of clusters respectively associated with the margins $X$ and $Y$.

\subsubsection*{Step~2: Reign \strut \hfill \footnotesize (Similarity-Based Joint Protocluster Allocation)}
We first divide the sample space of $(X, Y)$, to be denoted by $\Omega$, via a partition that is
based on the set of all marginal cluster means 
\begin{equation}\label{cluster}
  \begin{cases}
    \begin{split}
      \mu_{X} = \{\mu_{X}^{(1)}, \dots, \mu_{X}^{(K)}\},  \\ 
      \mu_{Y} = \{\mu_{Y}^{(1)}, \dots, \mu_{Y}^{(L)}\},      \\ 
    \end{split}   
  \end{cases}
  \qquad \text{where} \qquad 
  \begin{cases}
    \begin{split}
      \mu_{X}^{(i)} = E(X \mid \thetab_i) = \int x \, p(x \mid \thetab_i) \, \dif x,  \\ 
      \mu_{Y}^{(i)} = E(Y \mid \psib_i) = \int y \, q(y \mid \psib_i) \, \dif y. \\ 
    \end{split}       
  \end{cases}
\end{equation}
Specifically, to each point $(\mu_{X}^{(i)}, \mu_{X}^{(j)})$ in the Cartesian product 
\begin{equation}\label{product}
  \mu_X \times \mu_Y  = \{(\mu_{X}^{(1)}, \mu_{Y}^{(1)}), \dots, (\mu_{X}^{(K)}, \mu_{X}^{(L)})\}, 
\end{equation}
corresponds a Voronoi cell $A_{i, j}$ for $i = 1, \dots, K$ and $j = 1, \dots, L$. We refer to the Voronoi cells $A_{1,1} ,\dots, A_{K,L}$ as \emph{protoclusters} as they define a first partition of $\Omega$, and call the sites, $\mu_X \times \mu_Y$, as  \emph{protocluster centers}.

\subsubsection*{Step~3: Conquer \strut \hfill \footnotesize (Final Joint Cluster Allocation)} \label{Conquer}
After dividing $\Omega$ we conquer. That is, Step~3 identifies low density protoclusters to be conquered by high density regions, hence refining the naive partition of $\Omega$ from Step~2. To avoid including in the resulting partition regions that have a residual amount of mass, a minimum entry-level requirement is chosen to which we refer to as the sieve size $u \in [0,1]$. Let
\begin{equation}
  \label{Du}
  D_u \equiv \{(i,j): P(A_{i,j})\leq u\},
\end{equation}
be the indices of the protoclusters that have low mass and that hence will be conquered for a given sieve size. The final sample space partition corresponds to the Voronoi cells $B_{i,j}$ associated with the protocluster centers of the conquerors, i.e., $(\mu^{(i)}_X,\mu^{(j)}_Y)$ with $(i,j) \in D_u^c$. To assess how the number of final clusters depends on the sieve size, we define the \textit{conquering function} as
\begin{equation}
  \label{eq:conq}
  C(u)=|D_{u}^c| = KL - |D_u|,
\end{equation}
where $|A|$ denotes the cardinality and $A^c$ is the complement of the set $A$. 

Example~\ref{mix3} illustrates the main concepts and ideas of the sample space partitioning approach discussed above.

\begin{figure}[h]
  \centering 
  \begin{tabular}{cc}
    \subfloat[]
    {\includegraphics[scale=0.24]{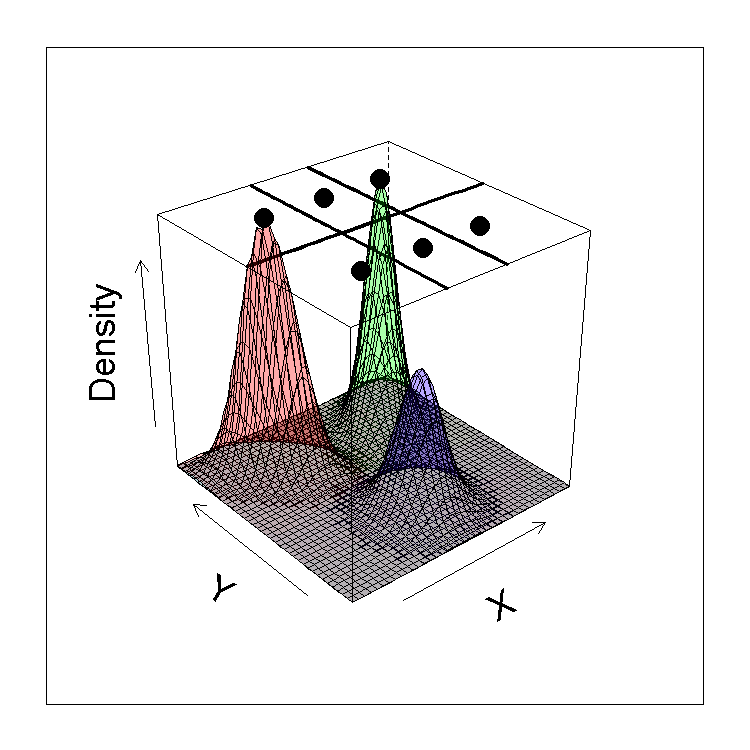}}
    \subfloat[]
    {\includegraphics[scale=0.3]{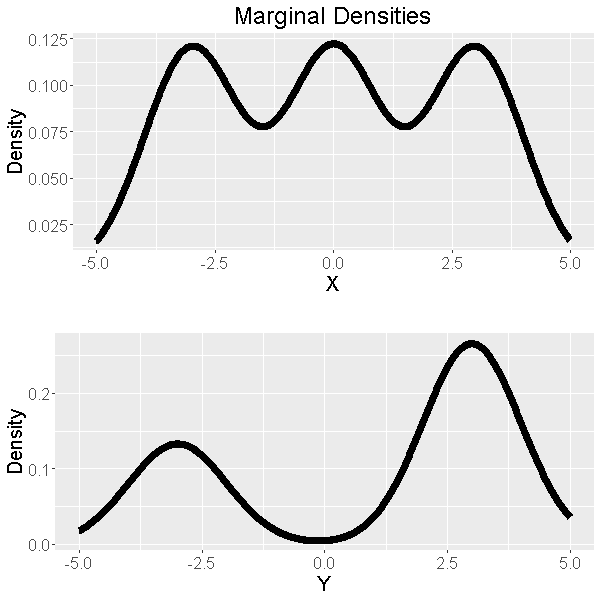}}
  \end{tabular}
  \centering \\
  \centering 
    \begin{tabular}{cc}
    \subfloat[]
    {\includegraphics[scale=0.3]{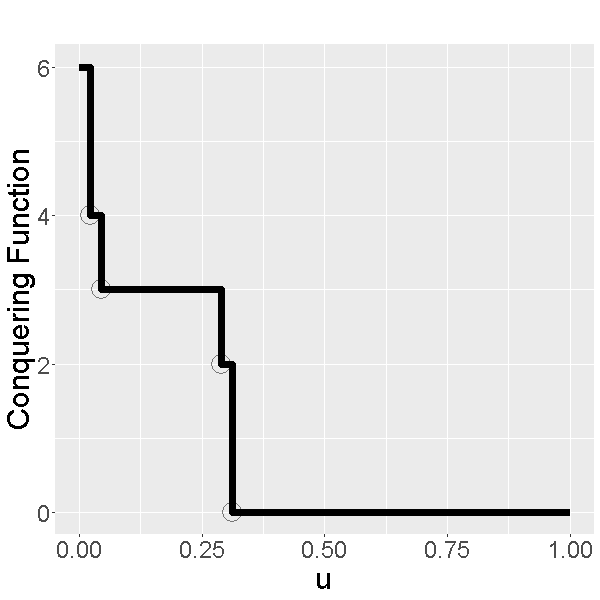}} \hspace{0.19cm}
    \subfloat[]
    {\includegraphics[scale=0.24]{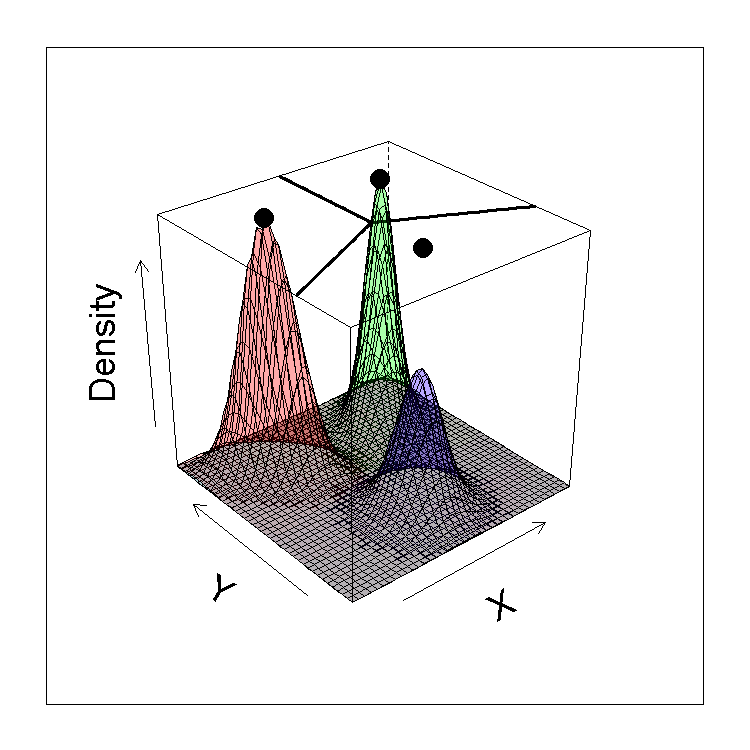}}
\end{tabular}
\caption{\footnotesize Reign-and-Conquer Partitioning for Example~\ref{mix3}. (a) protoclusters, protocluster centers ($\bullet$), and joint density. (b) Marginal densities. (c) Conquering function. (e) The Voronoi cells of the conquerors for $u=0.1$.\label{fig:ex1}}
\end{figure}

\begin{example}[Reign-and-conquer partitioning on a mixture of 3 bivariate Normal distributions]\label{mix3}\normalfont
In Figure~\ref{fig:ex1}~(a) we depict a mixture of 3 bivariate Normal distributions as in Equation~\eqref{mvn}, with means $\boldsymbol \mu_1 =(-3,3)$, $\boldsymbol \mu_2 =(3,3)$,   $\boldsymbol \mu_3 =(0,-3)$, mixing probabilities  $\pi_1=\pi_2 = \pi_3 = 1/3$, and covariance matrices:
$$ \boldsymbol \Sigma_1 =   \begin{bmatrix}
1&0.5\\
0.5&1
  \end{bmatrix},\quad  \boldsymbol \Sigma_2 =   \begin{bmatrix}
1&-0.5\\
-0.5&1
  \end{bmatrix},\quad \boldsymbol \Sigma_3 =   \begin{bmatrix}
1&0\\
0&1
  \end{bmatrix}.$$ 
This set-up leads to a different number of clusters per margin as can be seen in Figure~\ref{fig:ex1}~(a) and (b). Specifically, in the $Y$--margin there are two clusters with centers in $\mu_{Y}^{(1)}=3$ and $\mu_{Y}^{(2)}=-3$ (i.e., $L = 2$), while in the $X$--margin there are 3 clusters with centers in $\mu_{X}^{(1)}=3$, $\mu_{X}^{(2)}=0$ and $\mu_{X}^{(3)}=-3$ (i.e., $K = 3$). 
In Figure~\ref{fig:ex1} (c), we also display the conquering function, and in Figure~\ref{fig:ex1} (d) we depict the Voronoi cells of the conquerors corresponding to the sieve size $u=0.1$. If we were to regard protoclusters from Step~2 as `territories,' what the Reign-and-Conquer Partition does at Step~3 is to conquer low density cells and lets the dominant regions of mass conquer them. 
\end{example}

\subsection{\large{{$d$-Dimensional Extension and Theoretical Properties}}}\label{multivariate}
The approach from Section~\ref{marginal} extends naturally to a $D$-dimensional context as follows. For the margins, we now consider $X_1 \sim f_1, \dots, X_d \sim f_d$ with
\begin{equation}\label{mix2}
  f_j(x \mid K_j, \Thetab_j) = \sum_{k = 1}^{K_j} \pi_{k, j} \, p_j(x \mid \thetab_k), 
\end{equation}
where the notation in \eqref{mix2} extends that in \eqref{mix}, with $j = 1, \dots, d$. In particular, \eqref{mix2} implies that the first margin ($X_1$) has $K_1$ clusters, that the second margin has $K_2$ clusters, and so on. The partition underlying the divide step is  now formed by the Voronoi tesselation $\{A_{\mathbf{i}}: \mathbf{i} \in I\}$, with $\mathbf{i} = (i_1, \dots, i_d)$, $I = \{1, \dots, K_1\} \times \cdots \times \{1, \dots, K_d\}$, and where the $A_{\mathbf{i}}$ cell corresponds to the protocluster center $(\mu_{X_1}^{(i_1)}, \dots, \mu_{X_d}^{(i_d)})$, with $\mu_{X_j}^{(k)} = \E(X_j\,|\,\thetab_k) = \int x \, p_j(x \,|\, \thetab_k) \, \dif x$. The 
final clusters yield from the conquering step correspond to the Voronoi cell $B_{\mathbf{i}}$ associated with the protocluster centers of the conquerors, i.e., $(\mu^{(i_1)}_{X_1},\dots, \mu^{(i_d)}_{X_d})$ with $\mathbf{i} \in D_u^c$, where
\begin{equation}\label{Du2}
  D_u \equiv \{\mathbf{i} \in I: P(A_{\mathbf{i}})\leq u\},
\end{equation}
for $u \in [0, 1]$. The conquering function is more generally defined in the $d$-dimensional setting as 
\begin{equation}
  \label{conq2}
  C(u)=|D_{u}^c| = K_1 \times \cdots \times K_d - |D_u|.
\end{equation}
The conquering function is characterized by a set of properties summarized in the next theorem. 
\begin{theorem}\label{propsC}
The conquering function, $C(u)$ as defined in \eqref{conq2}, obeys the following properties:
\begin{enumerate}
\item It is nonincreasing.
\item It is continuous from the left.
\item It is bounded below by $C(1)=0$ and above by $C(0) = \prod_{j = 1}^{d} K_j$.
\item It integrates to one, i.e., $\int_{0}^{1} C(u) \, \dif u =1$.
\end{enumerate}  
\end{theorem}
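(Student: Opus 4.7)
The overall plan is to reduce each property to a corresponding statement about $|D_u|$ via the identity $C(u) = N - |D_u|$, with $N := \prod_{j=1}^{d} K_j$ the (finite) total number of protoclusters. Writing
$$
|D_u| \;=\; \sum_{\mathbf{i}\in I} \mathbbm{1}\{P(A_{\mathbf{i}}) \leq u\}
$$
as a finite sum of indicator functions in $u$ turns each claim into elementary bookkeeping on $N$ step functions.

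Property (1) is immediate: the event $\{P(A_{\mathbf{i}})\leq u\}$ is monotone in $u$, so $D_u \subseteq D_{u'}$ whenever $u \leq u'$; hence $|D_u|$ is nondecreasing and $C$ is nonincreasing. Property (3) follows by evaluating at the endpoints: at $u=1$ every $P(A_{\mathbf{i}})$ lies in $[0,1]$, so $D_1 = I$ and $C(1)=0$; at $u=0$, under the generic assumption that each protocluster carries strictly positive mass, $D_0 = \emptyset$ and $C(0) = N$.

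Property (4) is the slickest computation. By Fubini/Tonelli applied to the finite sum, and using $\sum_{\mathbf{i}\in I} P(A_{\mathbf{i}}) = 1$ because the Voronoi cells partition $\Omega$,
$$
\int_0^1 |D_u|\, \dif u \;=\; \sum_{\mathbf{i}\in I} \int_0^1 \mathbbm{1}\{P(A_{\mathbf{i}})\leq u\}\, \dif u \;=\; \sum_{\mathbf{i}\in I} \bigl(1 - P(A_{\mathbf{i}})\bigr) \;=\; N - 1,
$$
so $\int_0^1 C(u)\, \dif u = N - (N-1) = 1$.

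The step requiring real care is (2). Since each indicator $\mathbbm{1}\{P(A_{\mathbf{i}}) \leq u\}$ is a Heaviside-type step function of $u$ with a single jump at $u = P(A_{\mathbf{i}})$, I would distinguish two cases. If $u_0 \notin \{P(A_{\mathbf{i}}): \mathbf{i}\in I\}$, then $C$ is locally constant in a neighborhood of $u_0$ and the one-sided limit trivially agrees with $C(u_0)$. If $u_0 = P(A_{\mathbf{i}^*})$ for some (possibly several) indices $\mathbf{i}^*$, I would compare $\lim_{u\uparrow u_0} C(u)$ with $C(u_0)$ term by term, paying close attention to the strict-versus-weak inequality in the definition of $D_u$ to match the indicator's one-sided continuity with the left-continuity asserted in the statement. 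This endpoint accounting is the only delicate piece; the remaining three properties follow directly from the sum-of-indicators representation.
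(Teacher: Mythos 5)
Your treatment of properties (1), (3) and (4) is correct and is essentially the paper's own argument: monotonicity of $u \mapsto D_u$ for (1); endpoint evaluation for (3), where your explicit caveat that every protocluster must carry strictly positive mass in order to get $D_0=\emptyset$ is a point the paper leaves implicit; and for (4) you integrate $|D_u|$ to get $N-1$ and subtract, while the paper integrates $|D_u^c|=\sum_{\mathbf{i}\in I} 1_{[0,\,P(A_{\mathbf{i}}))}(u)$ directly to get $\sum_{\mathbf{i}\in I} P(A_{\mathbf{i}})=1$ --- the same computation up to taking complements.

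The gap is property (2), which you outline but never close, and the ``delicate endpoint accounting'' you defer is exactly where the difficulty sits. If you carry out the term-by-term comparison you propose, it does not go through under the weak-inequality definition: with $D_u=\{\mathbf{i}: P(A_{\mathbf{i}})\le u\}$ one has $C(u)=\sum_{\mathbf{i}\in I} 1_{[0,\,P(A_{\mathbf{i}}))}(u)$, and each summand $1_{[0,p)}$ satisfies $\lim_{u\uparrow p} 1_{[0,p)}(u)=1\neq 0=1_{[0,p)}(p)$. Hence at any $u_0$ equal to some $P(A_{\mathbf{i}^*})$ the left limit of $C$ strictly exceeds $C(u_0)$; the function is right-continuous at its jumps, not left-continuous. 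The claim does hold if $D_u$ is defined with the strict inequality $P(A_{\mathbf{i}})<u$ --- the convention the paper itself adopts in its empirical criterion \eqref{criterion} --- so a complete proof must either switch to that definition or concede right- rather than left-continuity. For what it is worth, the paper's own proof of (2) passes over the same point: it asserts $\bigcup_{n} D_{u_n}=D_u$ for $u_n\uparrow u$, which fails precisely when some cell has mass equal to $u$, since such an index lies in $D_u$ but in no $D_{u_n}$. Your instinct to isolate this step was sound, but as written the proposal leaves the one genuinely problematic claim unproved.
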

\begin{proof}
  See Appendix~A.
\end{proof}
\noindent It can be noticed that the conquering function from Example~\ref{mix3} verifies all the claims of Theorem~\ref{propsC} as can be seen from Figure~\ref{fig:ex1} (c). In addition to Theorem~\ref{propsC} it can be shown that the conquering function is a step function that has a finite number of steps provided that $K_1, \dots, K_d$ are finite. See Appendix~B. 

\section{{Learning from Data}}\label{sample}


We now devise an algorithm based on the probabilistic framework from Section~\ref{model}. The goal is to allocate observations in a dataset $\{\mathbf{x}_l\}_{l = 1}^n$, with $\mathbf{x}_l = (x_{l, 1}, \dots, x_{l, d})^{\T}$ in $\mathbb{R}^d$, into a set of meaningful classes---both in terms of the margins as well as the joint. Using the notation from  Section~\ref{multivariate}, we  introduce the R2C (Reign-and-Conquer Clustering) algorithm.

\begin{algorithm}
\caption{R2C (Reign-and-Conquer Clustering)\label{ddc}}
\begin{algorithmic}
\item[~~~Step~1.~Margins] Fit the $j$th marginal density in \eqref{mix2} using $\{x_{1, j}, \dots, x_{n, j}\}$, for $j = 1, \dots, d$, so to learn about $\{(K_j, \mu_{X_j}^{(1)}, \dots, \mu_{X_j}^{(K_j)})\}_{j = 1}^d.$
\item[~~~Step~2.~Reign:] Learn about the protoclusters $\{A_{\mathbf{i}}: \mathbf{i} \in I\}$  of $\{(\mu_{X_1}^{(i_1)}, \dots, \mu_{X_d}^{(i_d)}): \mathbf{i} \in I\}$.
\item[~~~Step~3.~Conquer:] Learn about the Voronoi cell of the conquerors,   $\{B_{\mathbf{i}}:\mathbf{i} \in D_u^c\}$, that is that of $\{(\mu^{(i_1)}_{X_1},\dots, \mu^{(i_d)}_{X_d}): \mathbf{i} \in D_u^c\}$, and allocate the $l$th observation to a cluster using the encoder
    \begin{equation}\label{encoder}
      \textnormal{Enc} (l) =
      \operatorname{arg\,min}_{(i_1, \dots, i_d)} \| \mathbf{x}_l - (\mu^{(i_1)}_{X_1},\dots, \mu^{(i_d)}_{X_d})\|^2.     
    \end{equation} 
\end{algorithmic}
\end{algorithm}

\noindent If we were to regard protoclusters from Step~2 as `territories,' what R2C does at Step~3 is to let the mass dominant regions conquer the low density ones. The R2C algorithm warrants some further comments: 
\begin{itemize}
\item \textbf{Step 1.~Margins}: To learn about $\{(K_j,  \mu_{X_j}^{(1)}, \dots,  \mu_{X_j}^{(K_j)})\}_{j = 1}^d$ several approaches can be taken. We use the NLP (non-local prior) for mixtures approach of \cite{fuquene2019choosing}, but alternatively one could use, for example, RJ MCMC (Reversible Jump Markov Chain Monte Carlo) \citep{green1995}. We opt for NPLs as they are designed to enforce parsimony by penalizing mixtures with a redundant number of components, and they bypass the need for complicated algorithms such as RJ MCMC.   Determining $K_j$ is a well-studied yet open problem, and an overview of the literature in this can be found in \cite{richardson1997}, \cite{fraley2002}, and \cite{baudry2010}.
\item \textbf{Step~2.~Reign}: To compute the protoclusters $\{A_{\mathbf{i}}: \mathbf{i} \in I\}$  of $\{(\mu_{X_1}^{(i_1)}, \dots,  \mu_{X_d}^{(i_d)}): \mathbf{i} \in I\}$, we resort to the cluster centers from Step~1.
\item \textbf{Step~3.~Conquer}: To learn about the cells of the conquerors, we need to learn about $D_u = \{\mathbf{i} \in I:  P(A_{\mathbf{i}})\leq u\}$---and this implies estimating $P(A_{\mathbf{i}})$. Several approaches can be taken, and here we opt for the simplest one---the maximum likelihood estimator (MLE). To avoid burdening the notation, we introduce the MLE on the bivariate case, but the details extend easily to the multivariate setting. Let $\{\mathbf{x}_l\}_{l = 1}^n = \{(x_{l, 1}, x_{l, 2})\}_{l = 1}^n$ and note that the number of points falling on the protoclusters, $n_{i, j} = |\{\mathbf{x}_l \in A_{i, j}\}_{l = 1}^n|$  is Multinomial distributed, that is,
  \begin{equation}
    \label{eq:nmulti}
    \mathbf{n} \sim
    \text{Multinomial}(\mathbf{p}),
  \end{equation}
  where $\mathbf{n} = (n_{1, 1}, \dots, n_{K, 1}, \dots, n_{1, L}, \dots, n_{K, L})$ and $\mathbf{p} = ({p}_{1, 1}, \dots, {p}_{K, 1}, \dots, {p}_{1, L}, \dots, {p}_{K, L})$. 
  Hence, the MLE is $\hat{\mathbf{p}} = \mathbf{n} / n$ and Bayesian inference can also be easily conducted.\footnote{Bayesian inference can be conducted by assuming a Dirichlet prior over the unit simplex on $\mathbb{R}^{KL}$, i.e., $\mathbf{p} \sim \text{Dirichlet}(\mathbf{a}),$ where $\mathbf{a} = (a_{1, 1}, \dots, a_{K, 1}, \dots, a_{1, L}, \dots, a_{K, L})$. Dirichlet--Multinomial conjugacy then implies that posterior inferences can be obtained from $\mathbf{p} \mid \mathbf{n} \sim \text{Dirichlet}(\mathbf{a} + \mathbf{n})$. Finally, another alternative would be to specificy a model for the joint distribution. Yet, given that we only need to learn about the ${p}_{i, j} = P(A_{i, j})$, and since we prefer to avoid specifying a copula that may not accurately describe the joint distribution, we opt for the above-described likelihood-based approaches.} Following the principles from Section~\ref{model}, this estimate implies conquering protoclusters centered at $({\mu}_k, {\mu}_l)$, for which 
\begin{equation}\label{criterion}
  \frac{n_{k,l}}{n} < u, \quad \text{for } u \in (0,1]. 
\end{equation}
The estimated regions of the conquerors ${B}_{k,l}$ are obtained by the Voronoi tessellation on the remaining $({\mu}_k, {\mu}_l)$ so that $(k,l) \notin {D}_u \equiv \{(i,j): P({A}_{i,j}<u)\}$. 
\end{itemize}
To a certain extent, the R2C algorithm combines the paradigms of model-based clustering and similarity-based clustering. Indeed, Step~1 consists of a marginal model-based clustering approach. In addition, just as in similarity-based clustering methods, such as $k$-means, Step~3 entails an encoder \citep[][Section~14.3]{friedman2009}, which determines to which cluster observation $\mathbf{x}_l$ belongs to. The Euclidean norm in~\eqref{encoder} can be replaced by any preferred norm. For example using the Mahalanobis norm would take also the spread of the cluster into account, additionally to the distance from the cluster center. 

Some comments on the implementation and computing are in order.
In terms of implementation, as mentioned earlier, to avoid including in the resulting partition of the sample space regions that have a residual amount of mass, a minimum entry-level requirement $u \in [0,1]$ should be set by the user. That value might be set at a fixed low level (say, $u = 0.1$), so that all resulting clusters have at least that mass. Alternatively, data-driven approaches for setting $u$, based on the fitted conquering function, are also explored in Section~\ref{numerical}.
In terms of computing, Step~1 can be parallelized into $D$ cores, and so to speed up the
computations parallel computing was implemented with the R package 
\texttt{parallel} \citep{rdevelopmentcoreteam2022}. Steps~2 and 3 involve the computation of Voronoi
tesselations from out of $a \equiv |\mu_{X_1} \times \dots \times
\mu_{X_D}| = K_1 \times \dots \times K_D$ protocluster centers and
from $a - |D_u|$ protocluster centers of the conquerors. While
parallel algorithms could have been employed also for higher-performance
computation of Steps 2 and 3 \citep[e.g.,][]{peterka2014} we have
opted for the simple partially parallelizable approach in Algorithm~\ref{ddc}.

We close this section with a simple yet important comment. While
Step~3 of the R2C algorithm leads to multivariate clustering of $\{\mathbf{x}_l\}_{l = 1}^n$, marginal
clustering can be made directly from Step~1 via the posterior probabilities
\begin{equation}\label{estZ}
  \hat Z_{l, k, j} = \frac{\pi_{k, j} p_j(x_{l, j} | \boldsymbol{\theta}_k)}{\sum_{k = 1}^{K_j} \pi_{k, j} p_j(x_{l, j} | \boldsymbol{\theta}_k)}.  
\end{equation}
Note that $\hat Z_{l, k, j} \in [0, 1]$ estimates the cluster membership labels of the $l$th observation on the $j$th margin, which are defined as $Z_{l, k, j} = 1$ if the $l$th observation on the $j$th margin $x_{l, j}$ belongs to the $k$th component, or $Z_{l, k, j} = 0$ otherwise.

\section{An Outline of a Game Theory Conceptualization}\label{gtheory}
\subsection{{A Game of Thrones---Starting Point}}
This section outlines an alternative way to look into the sample space partitioning approach from Section~\ref{model} as a game. To streamline the presentation we focus on the bivariate case; the extension to the multivariate case is a matter of adjusting notation. More specifically, the game to be considered starts at Step~2 of Reign-and-Conquer Partitioning (Section~\ref{model}), players are to be understood as $KL$ `Kings' owning the protocluster `territories' ($\{A_{i, j}\}$) and who decide whether or not they will attack their neighbors. To make matters concrete, think of Figure~\ref{fig:ex1} (a) as representing the protocluster `territories' of $KL$ = 6 Kings, who have to decide whether or not they attempt to conquer the territories of their neighbors. If a territory is attacked by two Kings, they might have to share the conquered territory.

The neighboring structure of players can be represented via a $KL\times KL$ adjacency matrix $\textbf{M}$, and it can be visualized using a (undirected) graph $\mathscr{G} = (N, \mathscr{E})$, where $\mathscr{E}$ is a set of edges representing a neighboring relation. The outcome of the game is an element in $S$(to be defined in Section~\ref{thrones}),  and it can be visualized with a directed graph $\mathcal{G} = (N, \mathcal{E})$, where $\mathcal{E}$ is a set of directed edges or arrows representing attacks.
To build intuition surrounding these ideas and concepts, let's revisit Example~\ref{mix3}. Figure~\ref{rev} (a) depicts the graph corresponding to the neighboring structure of the $KL$ = 6 Kings, and the corresponding adjacency matrix is
  \begin{equation*}
    \textbf{M} = \begin{pmatrix}
      0 & 1 & 0 & 1 & 1 & 0 \\
      1 & 0 & 1 & 1 & 1 & 1 \\
      0 & 1 & 0 & 0 & 1 & 1 \\
      1 & 1 & 0 & 0 & 1 & 1      \\
      1 & 1 & 1 & 1 & 0 & 1       \\
      0 & 1 & 1 & 1 & 1 & 0     \end{pmatrix}.
\end{equation*}
    The directed graph in Figure~\ref{rev} (b) depicts an example of attack decisions compatible with the outcome from  Figure~\ref{fig:ex1} (d). Indeed, for example, we can think of the outcome in Figure~\ref{fig:ex1} (d) as the consequence of     players $(1,1)$ and $(2,2)$ attacking player $(2, 1)$ and sharing the conquered territory, and so on. 

\subsection{{Representation, Equilibrium, and Open Challenges}}\label{thrones}
Below, an `attack' is denoted with a `1', and `not to attack' with a `0'. The (normal form) game of interest is given by the triple $G = (N, \{S_{i}\}_{i \in N} , \{U_i\}_{i \in N})$, where:
\begin{itemize}
\item $N = \{(i, j): i = 1, \dots, K,\; j = 1, \dots, L\}$ is the set of players (`Kings').
\item $S_{i, j}$ is the pure set of strategies of King $(i, j)$, 
  $$S_{i, j} = \{\text{who to attack, keeping in mind that only neighbors can be attacked}\} \subseteq \{0, 1\}^{KL},$$
  and $S = \underset{(i, j) \in N}{\bigtimes} S_{i, j}$ is the set of all vectors of strategies, where `$\bigtimes$' is the Cartesian product.
\item $U_{i, j}(\mathbf{s}) $ is the payoff of King $(i, j)$, with $\mathbf{s} = (\mathbf{s}_{i, j})_{{(i, j)} \in N}$, with $U_{i, j}:S \to \mathbb{R}$. 
\end{itemize}
 
\begin{figure}
\begin{center}
  \includegraphics[scale = .2]{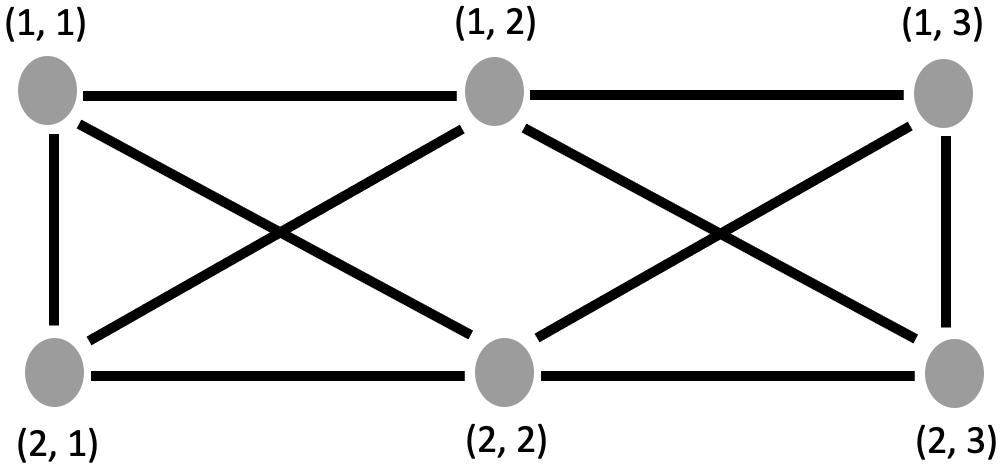} \hspace{0.75cm}
  \includegraphics[scale = .2]{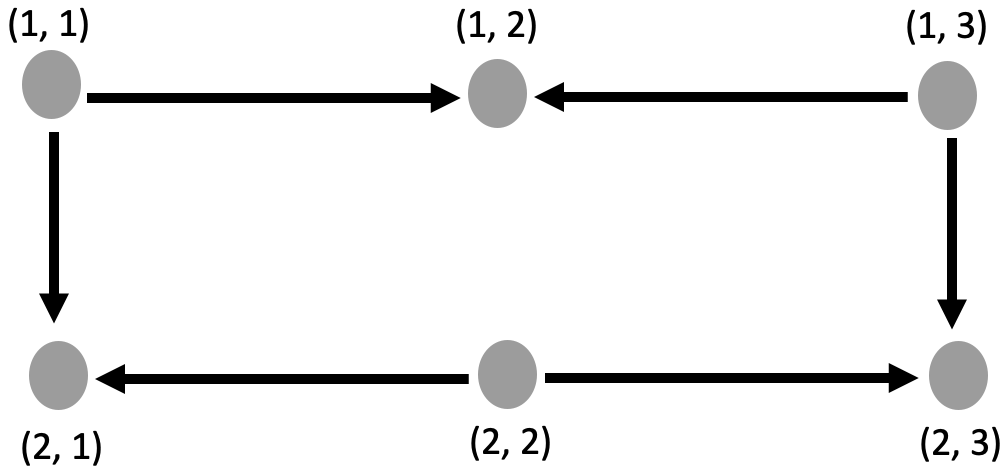}
\end{center}
  \hspace{4cm} (a)   \hspace{7.4cm} (b) 
  \caption{\label{rev} \footnotesize Revisiting Example~\ref{mix3}. (a) Neighboring structure corresponding to Figure~\ref{fig:ex1} (a). In (b) is depicted a directed graph with an instance of attack decisions compatible with the outcome from Figure~\ref{fig:ex1} (d). In both charts the nodes represent players (`Kings') (1, 1), \dots, (2, 3).}
\end{figure}

\noindent By construction, the strategy set of each player is finite and hence this is said to be a finite game. While a Nash equilibrium for this game may not exist over pure strategies, an equilibrium will exist over mixed strategies. A mixed strategy for player $(i, j)$ is a distribution over their set of pure strategies $S_{i, j}$, that is
\begin{equation*}
  \mathscr{S}_{i, j} = \bigg\{\sigma_{i, j}: S_{i, j} \to [0, 1] : \sum_{\mathbf{s}_{i, j} \in S_{i, j}} \sigma_{i, j}(\mathbf{s}_{i, j}) = 1\bigg\}.
\end{equation*}
The celebrated Nash theorem, recalled below for completeness, ensures that the game of interest has at least one equilibrium in mixed strategies. 
\begin{theorem}
Every finite game in strategic form, that has a finite number of players, has a Nash equilibrium in mixed strategies. 
\end{theorem}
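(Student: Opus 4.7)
The plan is to invoke the classical Kakutani fixed-point argument applied to the best-response correspondence, which is the route taken in Nash's own proof. Since the game is finite, every $S_{i,j}$ is a finite set, so $\mathscr{S}_{i,j}$ is the probability simplex on $S_{i,j}$ and is therefore nonempty, compact, and convex in some Euclidean space. The joint mixed-strategy space $\mathscr{S} = \bigtimes_{(i,j) \in N} \mathscr{S}_{i,j}$ inherits these properties. The first step is to extend each payoff multilinearly from $S$ to $\mathscr{S}$ via
\begin{equation*}
  U_{i,j}(\sigma) \;=\; \sum_{\mathbf{s} \in S} \Bigl(\prod_{(k,l) \in N} \sigma_{k,l}(\mathbf{s}_{k,l})\Bigr) \, U_{i,j}(\mathbf{s}),
\end{equation*}
which is a polynomial, hence jointly continuous on $\mathscr{S}$, and affine in each coordinate $\sigma_{i,j}$ with the remaining coordinates held fixed.

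Next I would introduce the best-response correspondence
\begin{equation*}
  B_{i,j}(\sigma) \;=\; \operatorname{arg\,max}_{\tau \in \mathscr{S}_{i,j}} U_{i,j}(\tau,\sigma_{-(i,j)}), \qquad B(\sigma) \;=\; \bigtimes_{(i,j) \in N} B_{i,j}(\sigma),
\end{equation*}
and observe that a Nash equilibrium in mixed strategies is precisely a fixed point $\sigma^{\star} \in B(\sigma^{\star})$. To apply Kakutani's theorem I would check the three hypotheses in turn: (i) $B(\sigma) \ne \emptyset$ for each $\sigma$, which follows from Weierstrass's extreme value theorem since $U_{i,j}(\cdot,\sigma_{-(i,j)})$ is continuous and $\mathscr{S}_{i,j}$ is compact; (ii) $B(\sigma)$ is convex, which follows from the fact that $U_{i,j}(\cdot,\sigma_{-(i,j)})$ is affine so that its argmax over a convex set is itself a convex set; and (iii) $B$ has a closed graph, which reduces to a routine subsequence argument exploiting the joint continuity of $U_{i,j}$. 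Once these items are in place, Kakutani's theorem delivers a fixed point of $B$, which is the desired equilibrium.

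The principal obstacle is verifying the closed-graph condition (iii): given $\sigma^{(n)} \to \sigma$ and $\tau^{(n)} \to \tau$ with $\tau^{(n)}_{i,j} \in B_{i,j}(\sigma^{(n)})$, one must pass to the limit in the defining inequality $U_{i,j}(\tau^{(n)}_{i,j},\sigma^{(n)}_{-(i,j)}) \ge U_{i,j}(\rho,\sigma^{(n)}_{-(i,j)})$ for every $\rho \in \mathscr{S}_{i,j}$, and then conclude $\tau_{i,j} \in B_{i,j}(\sigma)$. Although this step is standard, it is the point at which the multilinear extension and joint continuity of the payoffs are used essentially. Everything else reduces to well-known properties of simplices and continuous functions on compact sets, so in the final write-up I would simply sketch the argument and credit \citet{nash1951}.
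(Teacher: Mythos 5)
Your sketch is the standard Kakutani best-response argument for Nash's theorem, and it is correct as far as it goes; the paper itself does not prove this statement but simply recalls it as a classical result and cites \citet[Section~5]{maschler2020}, which contains essentially the argument you outline. Nothing further is needed, though if you were to write it out you should either carry through the closed-graph verification or, as the paper does, simply defer to the standard reference.
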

\textbf{Proof}:
See, for example, \citet[][Section~5]{maschler2020}.\\

\noindent Conceptually speaking, the approach above endows Step~3 with a much broader range of possibilities on how to partition the sample space $\Omega$. First, there are numerous ways in which the `incentives' (utility functions) can be set, and in particular they can mimic the ones from Section~\ref{model}. A refinement of Step~3 based on the principles outlined above is as follows: a) Compute a Nash equilibrium; b) Derive the cells of the conquerors resulting from such equilibrium. In terms of a) we note that computation of Nash equilibria is nontrivial in general, but it can be conducted using simplicial subdivision \citep{van1987}, a Newton method known as Govindan--Wilson algorithm \citep{govindan2003}, search methods \citep{porter2008}, among other.

Keeping in mind the computational motivation of the paper, in the numerical experiments to be reported below we focus on  the computationally appealing approach from Section~\ref{sample}---that bypass the need for computing Nash equilibria in Step~3---but we aim to revisit the numerical performance of this game-theoretical variant of the proposed methods in future research. 

\begin{figure}[h!]
  \centering
  \footnotesize \textbf{Scenario~1}\\
  \begin{tabular}{cc}
\subfloat[]{\includegraphics[scale=0.3]{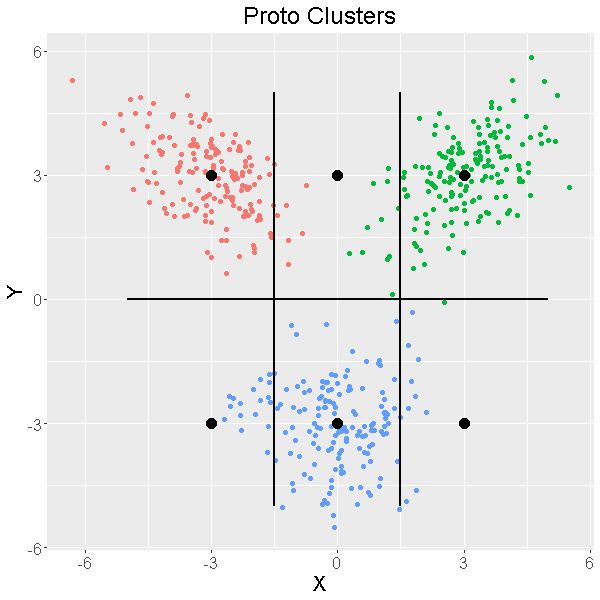}
} &
\subfloat[]{\includegraphics[scale=0.3]{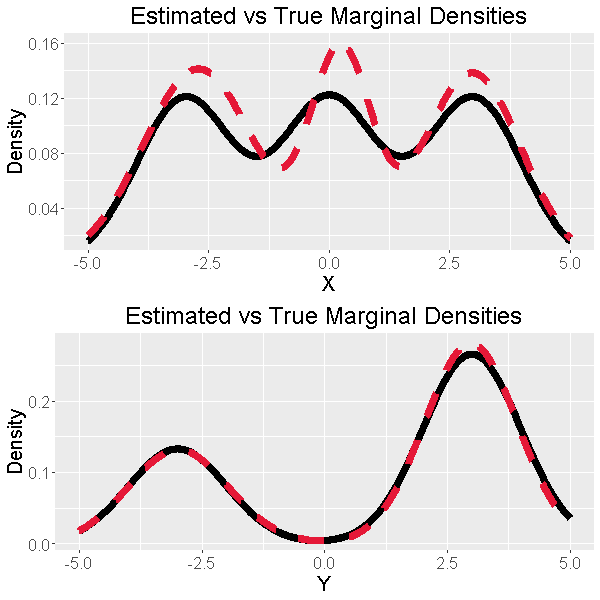}} 
\end{tabular}
\centering \\
\centering
  \begin{tabular}{cc}
\subfloat[]{\includegraphics[scale=0.3]{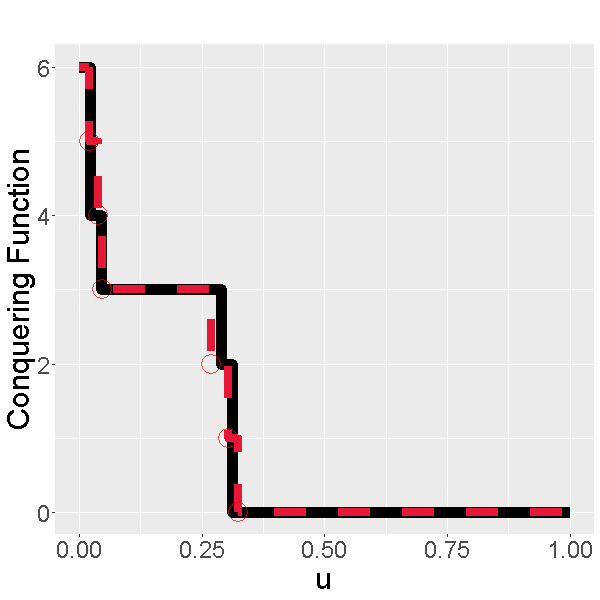}
} &
\subfloat[]{\includegraphics[scale=0.3]{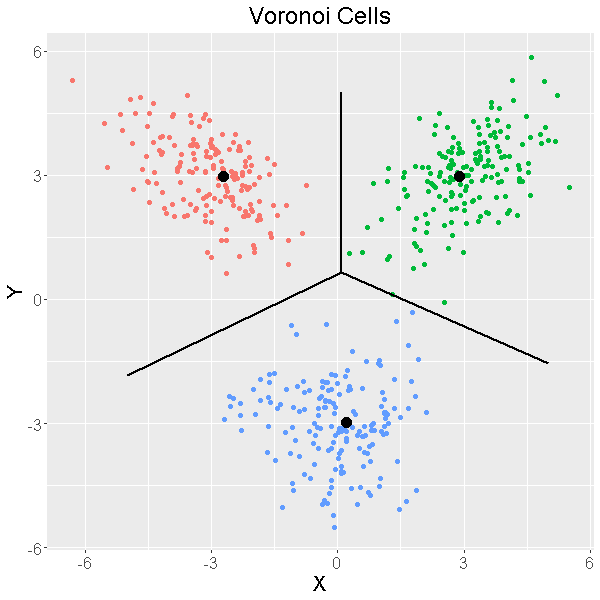}}
\end{tabular}
\caption{\footnotesize One shot experiments for Scenario~1. (a) Simulated data and protoclusters (Step 1). (b) Estimated (dashed) vs true (solid) marginal densities. (c) Estimated (dashed) vs true (solid) conquering functions. (d) Voronoi cells of the conquerors for $u=0.1$ (Steps 2 and 3).\label{num_1}}
\end{figure}

\section{Numerical Experiments on Artificial Data}\label{numerical}
\subsection{Simulation Setup and One Shot Experiments}\label{1shot}
In this section we study the performance of the proposed methods via numerical experiments. An exhaustive Monte Carlo simulation study will be presented in Section~\ref{montecarlo}. The Monte Carlo simulation will assess two data-driven approaches for setting the sieve size based on the conquering function as well as the strategy of setting a fixed low sieve size (e.g., $u = 0.1$). The data-driven approaches based on the conquering function will be called throughout as the \textit{plateau} ($u$ at which the longest plateau of $C(u)$ ends) and the \textit{edge} ($u$ at which the largest jump on $C(u)$ occurs); see Appendix~B for technical details.

Pitfalls 1--2 from Section~\ref{introduction} motivated us to design the following simulation scenarios:
\begin{itemize}
\item \textbf{Scenario 1}: Data are drawn from a mixture of $\mathcal{K}=3$ bivariate Normal distributions with weights, mean vectors, and covariance matrices as in Example~\ref{mix3}. To study the clustering performance as the sample size increases, we consider $n\in \{50,100,250,500,1000\}$. In Figure~\ref{num_1} we depict a one shot example of Reign-and-Conquer algorithm corresponding to a sample of size $n=500$ and $u=0.1$; as can be seen in Figure~\ref{num_1} (d), the proposed method suitably partitions the multivariate data. 
\item \textbf{Scenario 2}: Data are drawn from a mixture of ${\mathcal{K}}=3$ Clayton copulas \cite[Chapter 4.2]{nelsen2007introduction} with Normal margins: $f_X(x)= \phi(x; -5, 4^2)/2+ \phi(x; 3,4^2)/2$, and $f_Y(y)= \phi(x;-5,1)/3+ \phi(x; 2.5,1)/3+\phi(x;5,1)/3$. Trivially, the joint distribution does not obey \eqref{mvn}, and the number of clusters per margin is different ($K = 2$, $L = 3$). Here, we also consider sample sizes $n\in \{50,100,250,500,1000\}$, and in Figure~\ref{num_2} we depict the outcome of a one shot experiment with $n=500$. As can be seen in Figure~\ref{num_2} (d), the proposed method suitably partitions the multivariate data. 
\item \textbf{Scenario 3}: Data are drawn from a mixture of $d$-variate Normal distributions in dimensions $d\in \{5,10,15,20\}$ (moderate high dimensional data). 
In this scenario, the sample sizes and the number of clusters depend on $d$ in the following way: $n_d\equiv n  = \lfloor 10d^{3/2}\rfloor$ and ${\mathcal{K}}_d \equiv {\mathcal{K}} =\texttt{round}(\sqrt{d+1})$, where $\lfloor \cdot \rfloor$ and \texttt{round}() denotes the the floor and  round functions respectively. The covariance matrices and mixing probabilities are specified as $\boldsymbol \Sigma_k = \mathbf{I}_d$ and $\pi_k = 1/d$, for $k=1,\dots,{{\mathcal{K}}}$, whereas the mean vectors $\mu_k=(\mu_{1}^{(k)},\dots,\mu_{d}^{(k)})$ are sparsely defined: $\mu_{i}^{(k)}=0$ for $i\neq k$ and $\mu_{k}^{(k)}=d/\sqrt{2}$. This scenario leads to several identical marginal distributions, and the mean vector components are constrained to be equidistant $\|\boldsymbol \mu_i - \boldsymbol \mu_j\|_2 = d$ for all $j\neq i$; therefore the separation between clusters increases linearly with the number of dimensions. 
\end{itemize}

\begin{figure}[h!]
  \centering
  \footnotesize \textbf{Scenario~2}\\
  \begin{tabular}{cc}
\subfloat[]{\includegraphics[scale=0.3]{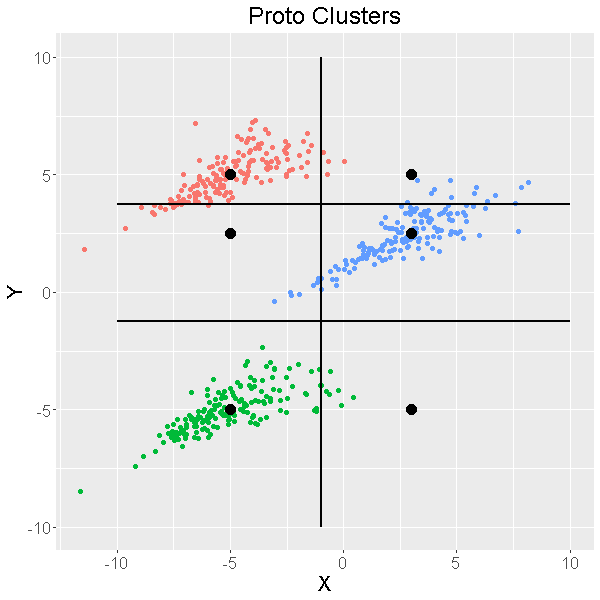}
} &
\subfloat[]{\includegraphics[scale=0.3]{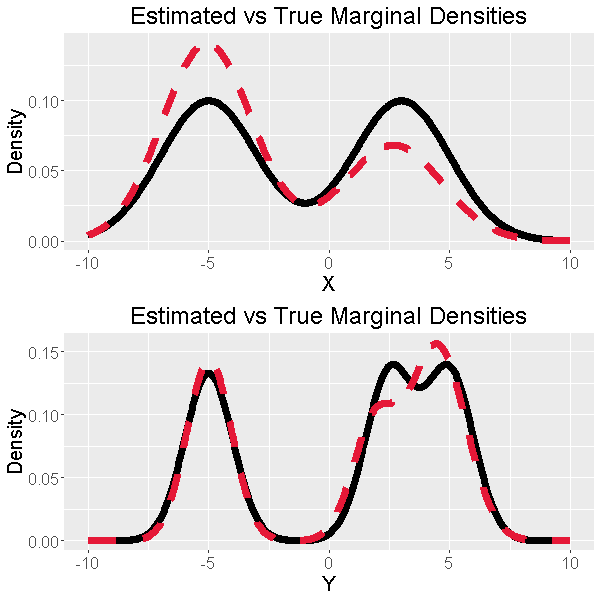}} 
\end{tabular}
\centering \\
\centering
  \begin{tabular}{cc}
\subfloat[]{\includegraphics[scale=0.3]{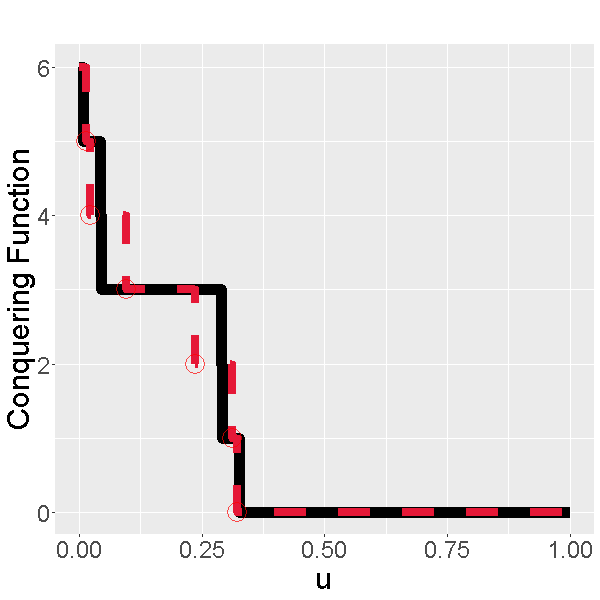}
} &
\subfloat[]{\includegraphics[scale=0.3]{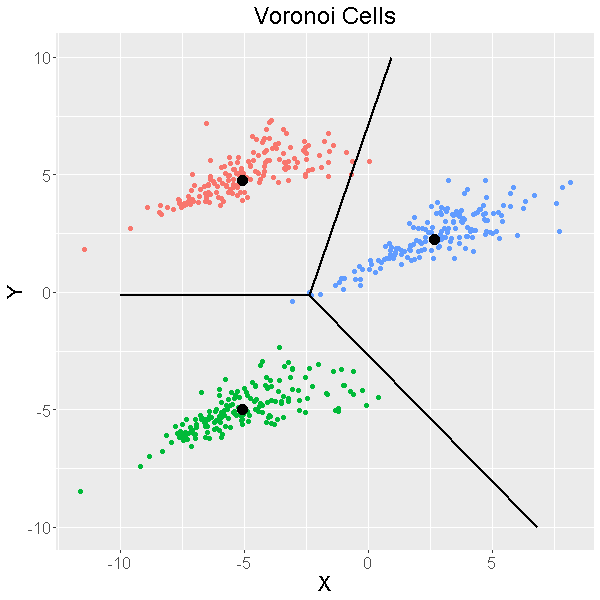}}
\end{tabular}
\caption{\footnotesize One shot experiments for Scenario~2. (a) Simulated data and protocluster (Step 1). (b) Estimated (dashed) vs true (solid) marginal densities. (c) Estimated (dashed) vs true (solid) conquering functions. (d) Voronoi cells of the conquerors for $u=0.1$ (Steps 2 and 3).\label{num_2}}
\end{figure}

\subsection{Monte Carlo Simulation Study}\label{montecarlo}
To assess the performance of the proposed clustering approach, for Scenarios 1--3 we redo the previous one shot analysis $M=1000$ times so to estimate the following clustering agreement metrics: Rand and Adjusted Rand Index (RI and ARI respectively), Jaccard Index (JI), and Fowlkes--Mallows Index (FMI); see \cite{pfitzner2009characterization} and references therein. We also report the empirical distribution of the number of clusters detected by the proposed method, over different conquering strategies (i.e., fixed $u$, plateau, and edge), along with the same outputs for plain vanilla GMM (Gaussian Mixture Model)-based clustering. For GMM, model selection was conducted using the BIC, and the model was fitted using the \texttt{mclust} \cite{fraley2012package} package in \texttt{R}.

\begin{figure}[h]
\centering
\textbf{Scenario 1}\\ \vspace{0.2cm}
\includegraphics[scale=0.3]{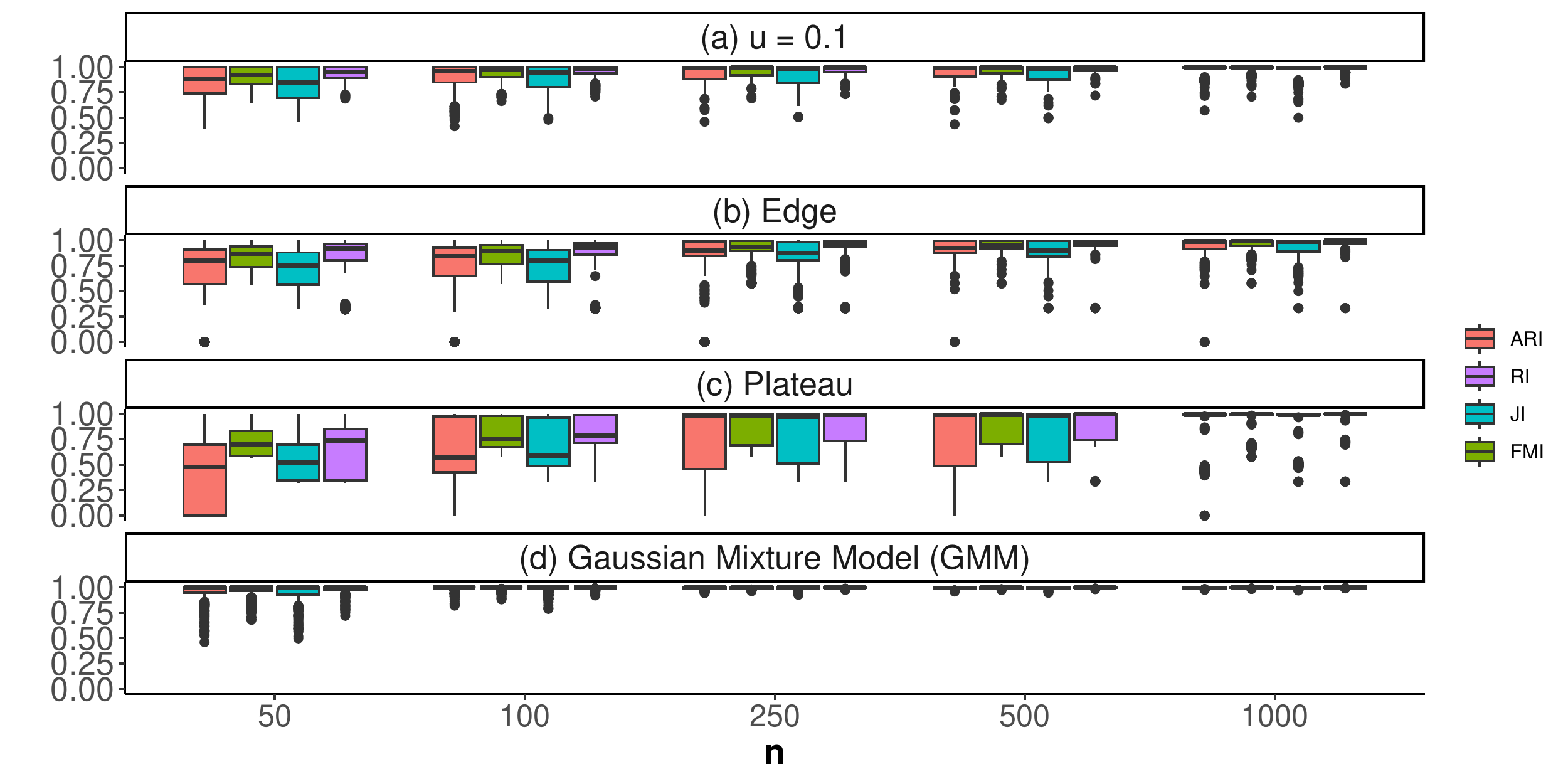}
\includegraphics[scale=0.3]{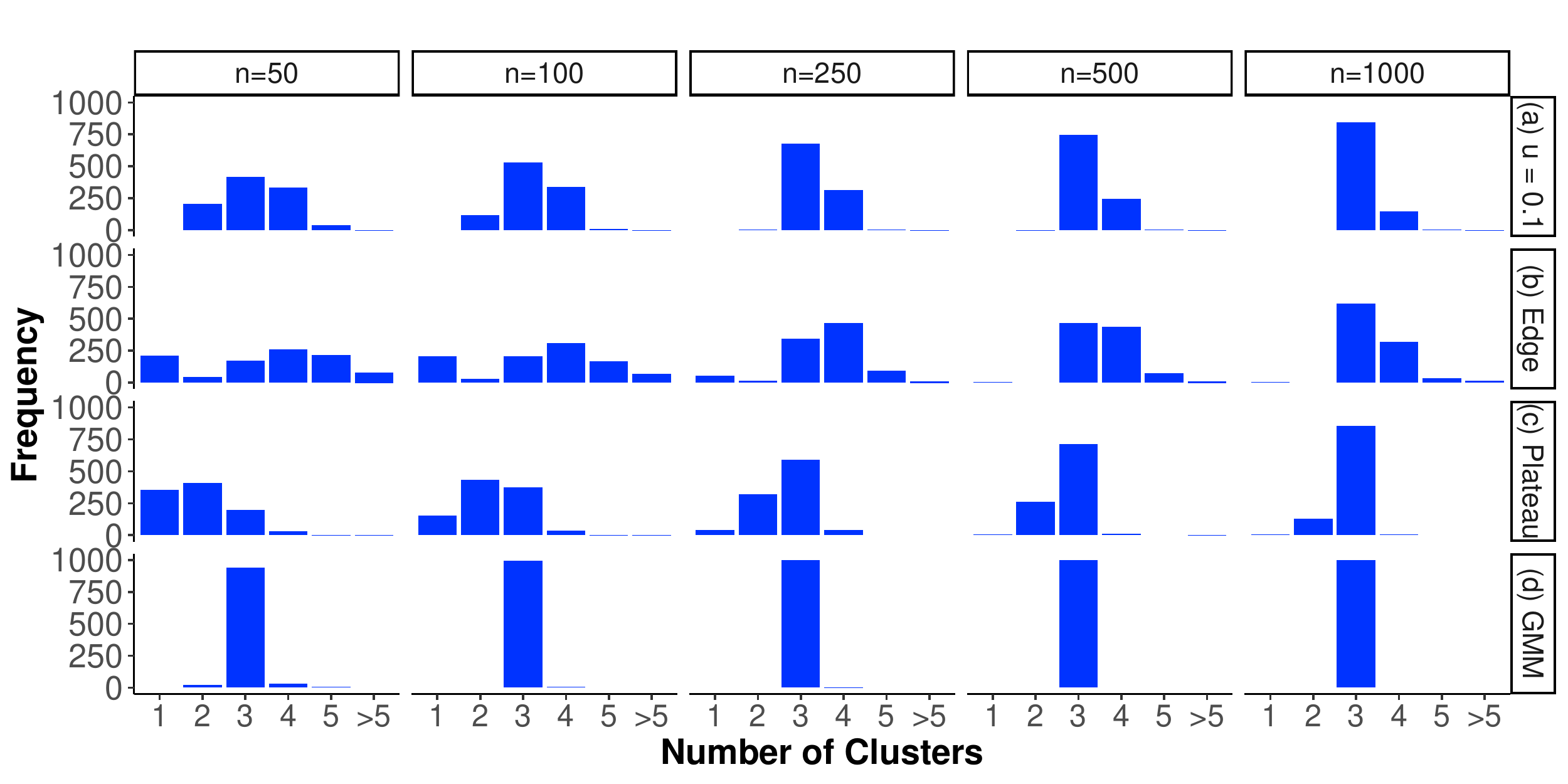} 
\textbf{Scenario 2}\\ \vspace{0.2cm}
\centering
\includegraphics[scale=0.3]{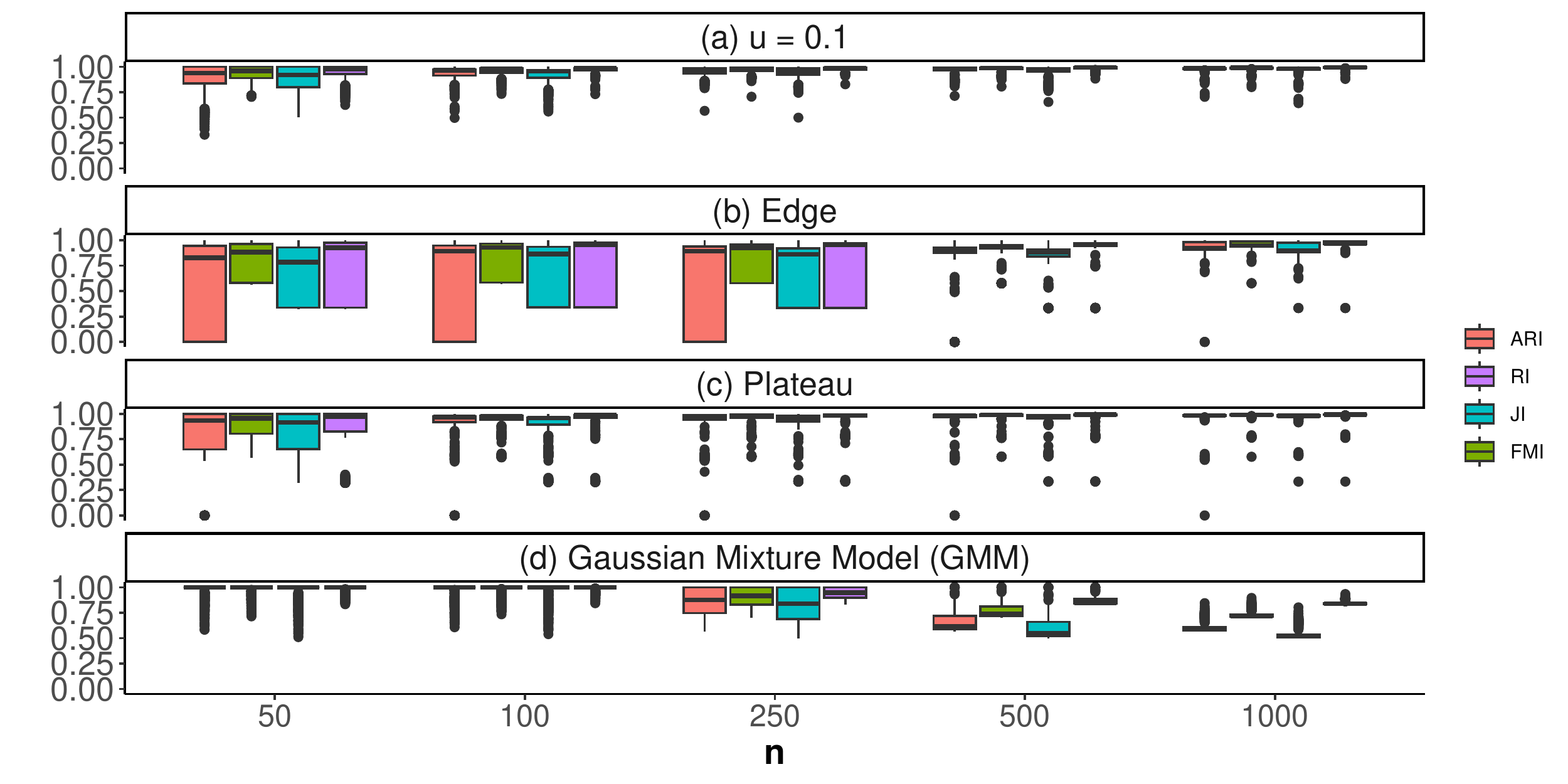}
\includegraphics[scale=0.3]{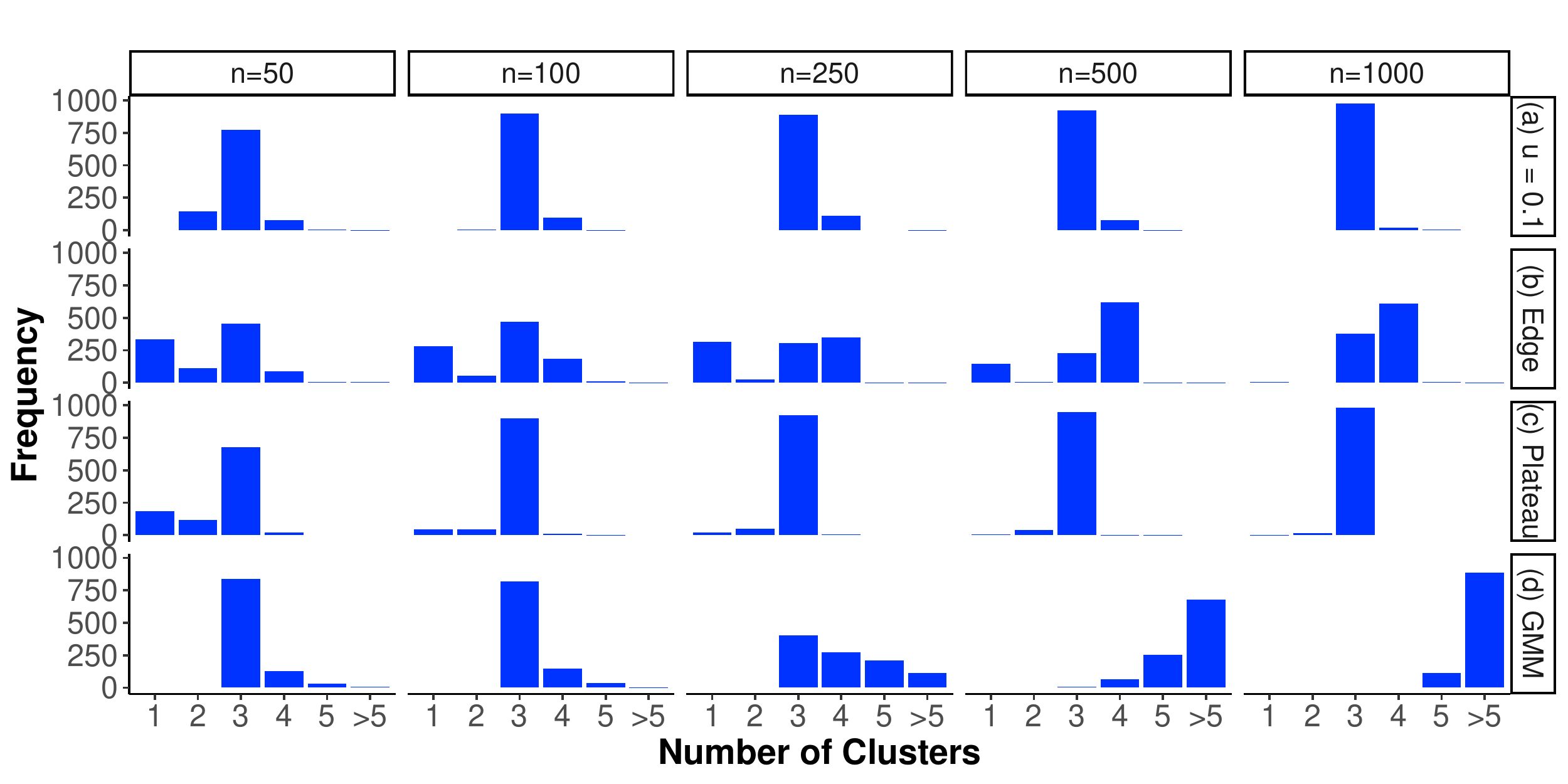}
\textbf{Scenario 3}\\ 
\includegraphics[scale=0.3]{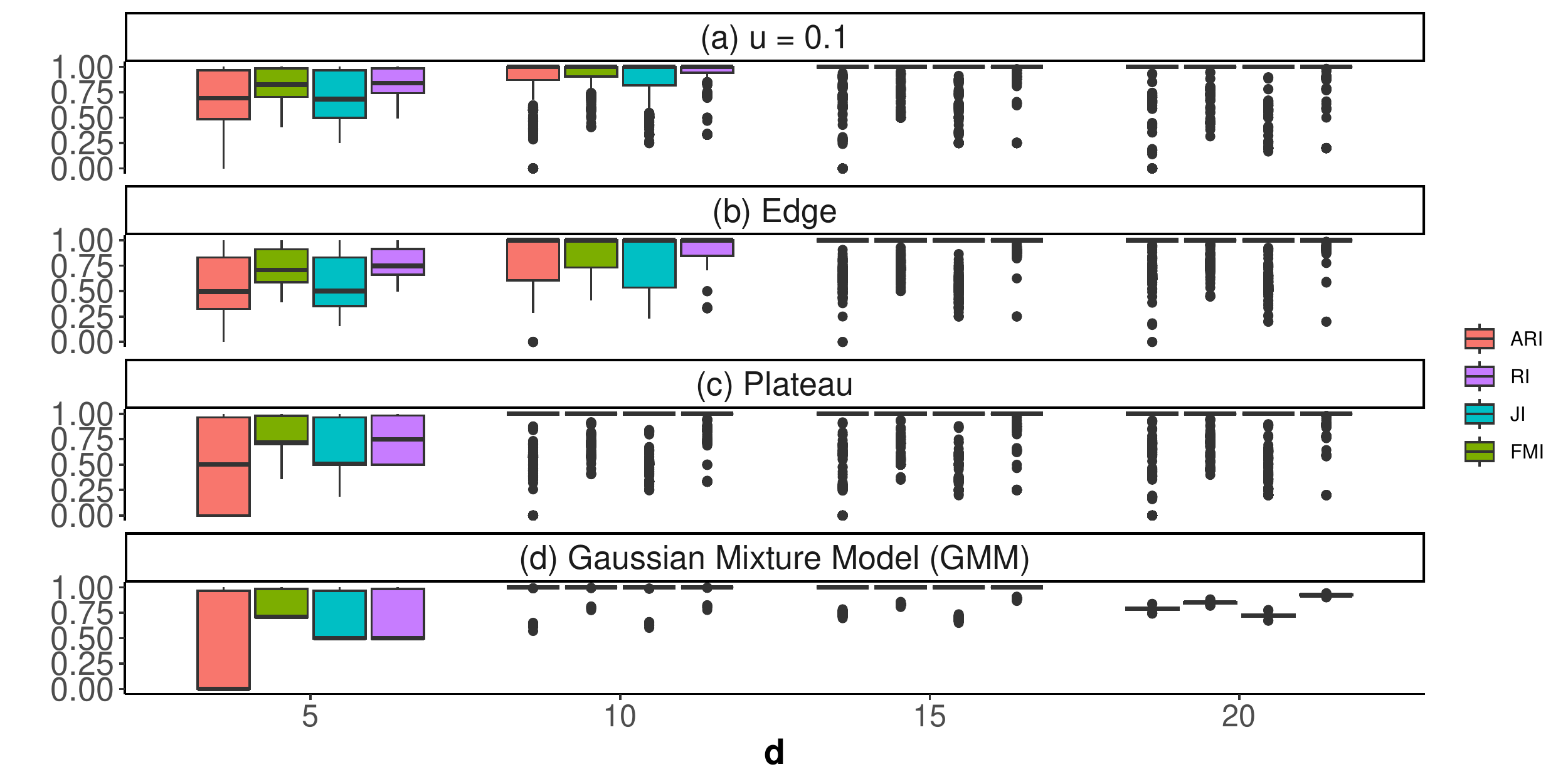}
\includegraphics[scale=0.3]{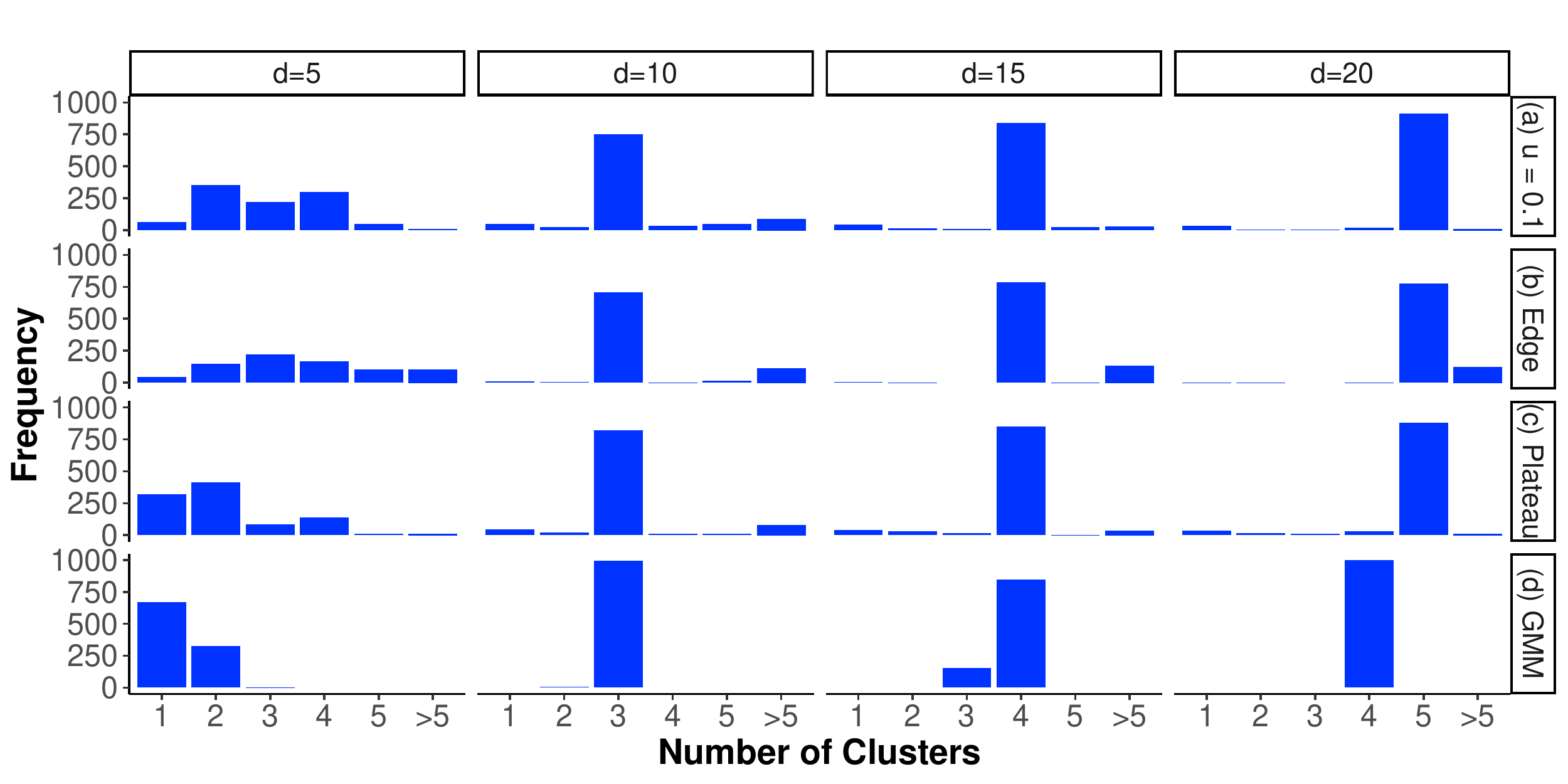}
\caption{\footnotesize Monte Carlo simulation study: (Left) Performance metrics (ARI, RI, JI, FMI) (Right) Empirical distribution on the number of detected clusters.\label{monte}}
\end{figure}

In Figure~\ref{monte} (top and middle), it can be seen that for Scenarios~1--2, as the sample size increases the performance metrics increase on average for all conquering strategies. Interestingly, for Scenario 1, the edge conquering strategy works better than the plateau for relatively small sample sizes. Conversely, in Scenario 2, the plateau yields on average better results than the edge for small sample sizes. In Scenarios 1--2, fixing a sieve size of $u=0.1$, produces accurate clustering results on average even for small sample sizes. In addition, as the sample sizes increases the proposed method identifies most frequently the correct number of clusters ${{\mathcal{K}}}=3$ for Scenarios~1--2---both when $u=0.1$ as well as when $u$ is set using the plateau.

In Figure~\ref{monte} (bottom) we present the performance of the proposed method for Scenario 3. As can be seen in Figure~\ref{monte} (bottom--left), as the dimension increases on average the proposed method presents better agreement metrics (recall that in Scenario~3 cluster separation grows linearly with data dimension). In Figure~\ref{monte} (bottom--right), it can be seen that as the sample size increases, the proposed method most frequently captures the true number of clusters $\mathcal{K}_d \in \{2,3,4,5\}$ for dimensions $d\in\{5,10,15,20\}$ respectively.

Some final comments on the comparison of the Reign-and-Conquer clustering against GMM are in order. In Scenario~1 the data are from are simulated from a low-dimensional Gaussian mixture model, and hence perhaps not surprisingly GMM overperforms the proposed approach. Still, the performance of the proposed approach is still remarkable especially as we make no assumption on the joint. In addition, Reign-and-Conquer has a comparable, if not superior, performance than GMM over Scenarios 2--3. 

\begin{figure}[h!]
\begin{center}
\textbf{Banknotes}
\end{center}
\centering
  \begin{tabular}{cc}
\subfloat[]{\includegraphics[scale=0.33]{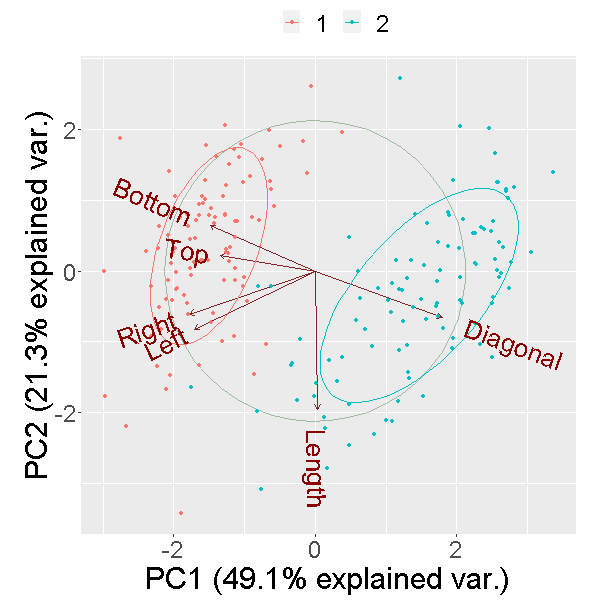}
} &
\subfloat[]{\includegraphics[scale=0.33]{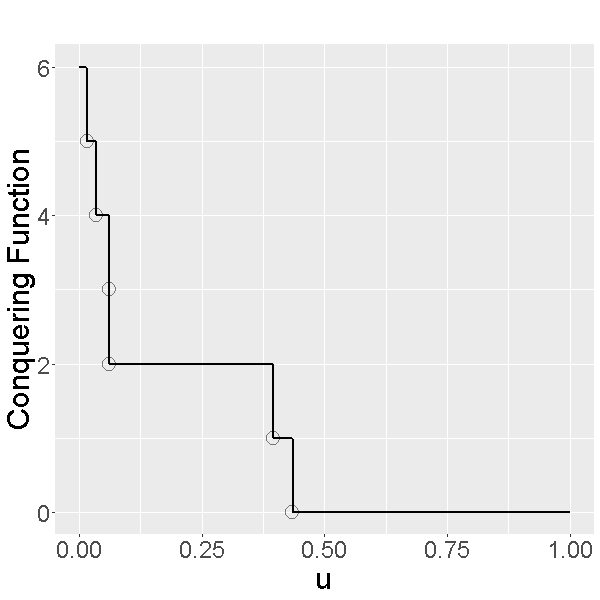}} 
\end{tabular}\\
\begin{center}
\textbf{Italian Wine}
\end{center}
\centering
  \begin{tabular}{cc}
\subfloat[]{\includegraphics[scale=0.33]{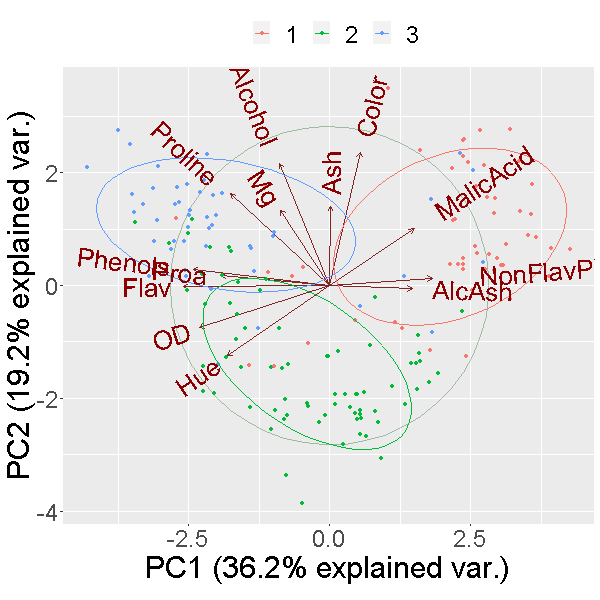}
} &
\subfloat[]{\includegraphics[scale=0.33]{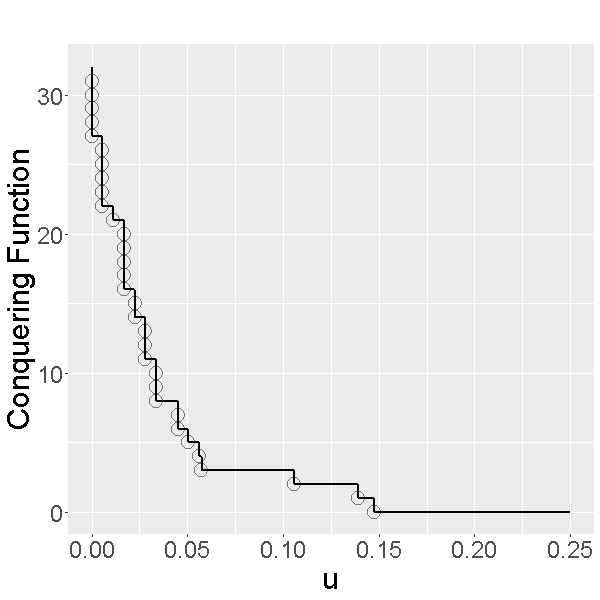}} 
\end{tabular}
\caption{\footnotesize (a) Biplot banknote data set, the color key corresponds to the cluster labels obtained for $u=0.36$ (Plateau). (b) Conquering function estimate. (c) Biplot of wine data set, the color key corresponds to the cluster labels obtained for $u=0.105$ (Plateau). (d) Conquering function estimate (restriced to $[0,0.25]$ for visualization purposes).\label{app_2}}
\end{figure}

\section{Real Data Illustrations}\label{applications} 
\subsection*{Banknote Data}
The first dataset to be analyzed with the proposed methods contains $p=6$ measurements made on 100 genuine and 100 counterfeit old-Swiss 1000-franc bank notes \cite[][pp.~5--8]{flury1988multivariate}. The data consists of the following measurements (in millimetres): length of bill, width of left edge, width of right edge, bottom margin width, top margin width, and the width of diagonal. The data are available from the \texttt{mclust}  \texttt{R} package \citep{fraley2012package}. In Figure~\ref{app_2} (a) we depict a biplot to represent the first two principal components of the data, along with the corresponding clustering yield by the proposed Reign-and-Conquer clustering. The sieve size was set using the Plateau conquering strategy that corresponds to the best average results on the simulations setting in Scenario~3. In Figure~\ref{app_2} (b) we depict the fitted conquering function, from where it can be seen that the plateau consists of $u = 0.36$. As can be noticed from the confusion matrix in Table~\ref{tab:comp1}, Reign-and-Conquer does an excellent job classifying counterfeit as well as genuine data. The GMM analysis combined with BIC suggests that there could be three clusters. The latter analysis is not least interesting from a forensic viewpoint, as it suggests that there might be two clusters of counterfeit banknotes. Finally, we note that the number of marginal clusters obtained using Reign-and-Conquer (i.e., $K_1, \dots, K_6$) ranges from 1--3 clusters. 

\subsection*{Italian Wine Data}
\noindent The second dataset on which the proposed approach will be illustrated contains the results of a chemical analysis of wines grown in the same region of Italy derived from three cultivars (Barbera, Grignolino, and Barolo); the data are available from \cite{vandeginste1990parvus}. The chemical analysis includes the measurement of $d=13$ continuous variables (such as Alcohol, Malic acid, Ash, Flavanoids, etc) on $n=178$ instances. Similarly to the banknote data illustration, in Figure~\ref{app_2} (c) we depict a biplot to represent the first two principal components of the data, along with the corresponding clustering yield by the proposed method. The sieve size was set using the Plateau conquering strategy that corresponds to the best average results on the simulations setting in Scenario~3. In Figure~\ref{app_2} (d) we depict the fitted conquering function, the plateau consists of $u = 0.105$. As can be seen from the confusion matrix in Table~\ref{tab:comp1}, Reign-and-Conquer learns about the `right' number of cultivars and the obtained clusters have a resemblance with the cultivars. The GMM analysis combined with BIC would offer another interesting outlook, suggesting that two of these clusters are so similar that they should perhaps be merged. Finally, we note that the number of marginal clusters (i.e., $K_1, \dots, K_{13}$) is 1 for 8 of the dimensions, and 2 for the remainder dimensions.\\


\begin{table}\footnotesize 
\parbox{.45\linewidth}{
\centering
\begin{tabular}{|l|c|cc|}
\hline \hline 
\textbf{Method} & \textbf{Cluster} &\textbf{Counterfeit} &\textbf{Genuine}\\
\hline
\multirow{2}*{RC} & 1& 99 &0\\
 & 2& 1&100\\
\hline
\hline
\multirow{3}*{GMM} & 1& 16 &2\\
 & 2& 0&98\\
 & 3& 84&0\\
\hline \hline 
\end{tabular} 
}
\hspace{.3cm} 
\parbox{.45\linewidth}{
\centering
\begin{tabular}{|l|c|ccc|} 
\hline \hline 
\textbf{Method} & \textbf{Cluster} &\textbf{Barbera}&\textbf{Grignolino} &\textbf{Barolo}\\
\hline
\multirow{3}*{RC} & 1& 42 &9 & 5\\
 & 2& 1&56&16\\
& 3& 5&7&37\\
\hline
\hline
\multirow{2}*{GMM} & 1& 0&27 &59\\
 & 2& 48 &44 & 0\\
\hline \hline
\end{tabular}
}
\caption{\footnotesize Confusion matrix for banknote (left) and wine (right) data. \textsc{Note}: RC = Reign-and-Conquer, GMM = Gaussian Mixture Model. \label{tab:comp1}}
\end{table}

\section{Final Observations and Concluding Remarks}
\label{closing}
This paper devises an unsupervised learning approach based on a marginal model-based specification followed-up by a strategy-game inspired algorithm that partitions the sample space. The approach was motivated from Pitfalls~1--2 from Section~1, and it can be used for clustering data---both in a multivariate manner as well as marginally. Pitfall~1---the single $K$ problem---implies that fitting a plain vanilla multivariate Gaussian model might result in all margins and the joint having the same number of components, unless one is able to penalize for deviations from $\mu_{k, j} = \mu_{k', j}$ and $\sigma_{k, j} = \sigma_{k', j}$. Pitfall~2---the curse of dimensionality---is well known, and it implies that learning about a plain vanilla Gaussian mixture model implies  learning about $O(d^2)$ parameters, when $d \to \infty$. Prompted by these concerns, the herein proposed clustering approach only specifies a model for the margins but leaves the joint unspecified, it has the advantage of being parallelizable, and bypasses the need to learn about $\mathcal{K}d(d+1)/2$ parameters used in the covariance matrices $\boldsymbol\Sigma_1, \dots, \boldsymbol\Sigma_{\mathcal{K}}$ required for a `full' (joint) Gaussian model-based clustering approach. The conducted numerical experiments suggest that the proposed approach has a comparable performance, and even in some cases superior, than a plain vanilla Gaussian model-based approach. 

While the obtained numerical evidence indicates a satisfactory performance of the proposed method under a variety of situations, there is still room for improvement, open problems to be addressed as well as opportunities for future research. First, the geometry of the boundaries of the final kingdoms (i.e., the Voronoi Cells of the conquerors) could perhaps be bended so to better adapt to the structure of the data, to offer more flexibility to the resulting partitions, and ultimately to improve clustering. Second, the game-theoretical variant from Section~\ref{gtheory} opens a world of opportunities on ways to set the `incentives' to conquer, via an utility function, to explored in a follow-up paper. In terms of the ``single $K$ problem'', an alternative to the path taken here would be to develop Bayesian regularization approaches that aim to penalize for deviations from $\mu_{k, j} = \mu_{k', j}$ and $\sigma_{k, j} = \sigma_{k', j}$. Finally, while here the focus has been on unsupervised learning, the potential of related strategy-game inspired approaches for supervised learning would seem natural. 

\noindent \textbf{Acknowledgments}: We thank participants of IFCS 2022 for insightful  comments, discussions, and feedback.\\ 

\noindent \textbf{Funding}: MdC was partially supported by FCT (\textit{Fundação para a Ciência e a Tecnologia}, Portugal) through the project and UID/MAT/00006/2020. \vspace{0.2cm}
\section*{{Appendix}}\footnotesize 
\subsection*{{Appendix A: Proofs of Theoretical Results}}\footnotesize 
\label{Proofs}



  Before getting started with the proofs we lay the groundwork. The proof of Theorem~\ref{propsC} uses the following representation of the conquering function
  \begin{equation}\label{altdef}
    C(u) = \prod_{j = 1}^{d} K_j - \sum_{\mathbf{i} \in I} 1_{D_u}(\mathbf{i}), 
  \end{equation}
  which follows directly from \eqref{conq2}. Here, $1_A$ is the indicator of set $A$ and in the proof we will make use of some of its well-known properties \citep[e.g.,][Chapter~1]{resnick2019}, such as 
  \begin{equation}\label{suminf}
    \underset{n\to \infty}{\lim \sup}~1_{A_n} = 1_{{\lim \sup}_{n\to \infty}~A_n}, \qquad
    \underset{n\to \infty}{\lim \inf}~1_{A_n} = 1_{\lim \inf_{n \to \infty}~A_n}.
  \end{equation}
  Since \eqref{suminf} holds for both $\lim \sup$ and $\lim \inf$ it follows that $\lim_{n \to \infty} 1_{A_n} = 1_{\lim_{n \to \infty} A_n}$. Recall in addition that if $\{A_n\}$ is an nondecreasing sequence of sets, then its limit is the infinite union, that is
  \begin{equation}\label{lim}
    A_n \subseteq A_{n + 1} \Longrightarrow \lim_{n \to \infty} P(A_n) = \bigcup_{n = 1}^{\infty}\,A_n. 
  \end{equation}
  See, for instance, \citet[][Proposition~1.4.1]{resnick2019}. Finally, the proof of Claim~d) in Theorem~\ref{propsC} will make use of the Lebesgue measure over the unit interval, $\lambda([a, b]) = b - a$, for $[a, b] \subseteq [0, 1]$. \\
\noindent  \textit{Proof of Theorem~\ref{propsC}.}
  \begin{enumerate}[a)]
  \item Consider $(u, v) \in [0, 1]^2$ such that $u \leq v$. Then, whenever $P(A_{\mathbf{i}}) \leq u$ it follows that $P(A_{\mathbf{i}}) \leq v$; or in other words $D_{u} \subseteq D_{v}$, which in turn implies that $1_{D_{u}}(\mathbf{i}) \leq 1_{D_{v}}(\mathbf{i})$. Hence, 
    \begin{equation*}
      \begin{split}
        - \sum_{\mathbf{i} \in I} 1_{D_{u}}(\mathbf{i}) \geq - \sum_{\mathbf{i} \in I}1_{D_{v}}(\mathbf{i}) &\Longrightarrow
        \prod_{j = 1}^{d} K_j - \sum_{\mathbf{i} \in I} 1_{D_{u}}(\mathbf{i}) \geq
        \prod_{j = 1}^{d} K_j - \sum_{\mathbf{i} \in I} 1_{D_{v}}(\mathbf{i}) \\
        &\Longrightarrow C(u) \geq C(v),
      \end{split}
    \end{equation*}
    from where the final result follows. \strut \hfill \qed 
  \item First note that the proof of Claim~a) implies that $D_u$ is a nondecreasing, in the sense $D_u \subseteq D_v$, for any $u \leq v$ with $(u, v) \in [0, 1]^2$. Next, consider an arbitrary $u \in [0, 1]$ and a sequence 
    $u_n$ such that $u_n \to u$, with $u_n \leq u$ for every $n \in \mathbb{N}$. Then, for a sufficiently large $n$ it holds that $u_n \leq u_{n + 1}$ which in turn implies that $D_{u_n} \subseteq D_{u_{n + 1}}$. This, along with \eqref{lim} and the fact that $D_u$ is nondecreasing, implies that $\lim_{n\to \infty} D_{u_n} = \bigcup_{n = 1}^\infty D_{u_n} = D_u$. Finally, \eqref{altdef} and \eqref{suminf} then yield that 
    \begin{equation*}
      \lim_{n \to \infty}\,C(u_n) = \prod_{j = 1}^d K_j - \sum_{\mathbf{i} \in I} \lim_{n\to\infty}\,1_{D_n}(\mathbf{i)} =
      \prod_{j = 1}^d K_j - \sum_{\mathbf{i} \in I} \,1_{\lim_{n\to\infty} D_n(\mathbf{i})} =
      \prod_{j = 1}^d K_j - \sum_{\mathbf{i} \in I} 1_{D_u(\mathbf{i})} = C(u),
    \end{equation*}
    which concludes the proof. \strut \hfill \qed 
  \item Trivially, since Claim~a) shows that $C(u)$ is nonincreasing it follows that for every $u \in [0, 1]$, 
    \begin{equation*}
      C(u) \geq C(1) = \prod_{j = 1}^d K_j - |D_1| = 0,
    \end{equation*}
    where the final equality is a consequence of the fact that $|D_1| =|\{\mathbf{i}:P(A_{\mathbf{i}})\leq1\}|=|\{A_{\mathbf{i}}:\mathbf{i} \in I\}| = \prod_{j = 1}^d K_j$. The final result then follows from \eqref{altdef} and \eqref{suminf} by noting that for every $u \in [0, 1]$, 
    \begin{equation*}
      \begin{split}
        C(u) \leq C(0) = 
      \prod_{j = 1}^d K_j - |D_0| = 
      \prod_{j = 1}^d K_j,
      \end{split}
    \end{equation*}
    since $|D_0| = |\{\mathbf{i}:P(A_{i,j})\leq 0\}| = |\emptyset| = 0$. \strut \hfill \qed
  \item First note that, 
    \begin{equation}\label{trivia}
      1_{D_u^c}(\mathbf{i}) =
      \begin{cases}
        1, & \mathbf{i} \in D_u^c, \\
        0, & \mathbf{i} \in \in D_u,
      \end{cases} ~=~
      \begin{cases}
        1, & 0 \leq u < P(A_{\mathbf{i}}), \\
        0, & \text{otherwise}.
      \end{cases}
    \end{equation}
    Next, observe that $|D_u^c| = \sum_{\mathbf{i} \in I} 1_{D_u^c}(\mathbf{i})$ which along with \eqref{trivia} yields  
    \begin{equation*}
      \begin{split}
        \int_0^1 C(u) \, \dif u = \int_0^1 |D_u^c| \, \dif u  
         = \sum_{\mathbf{i} \in I} \int_0^1 1_{D_u^c}(\mathbf{i}) \, \dif u 
         = \sum_{\mathbf{i} \in I} \int_{[0, P(A_\mathbf{i})]}  \, \dif u 
         = \sum_{\mathbf{i} \in I}\lambda ([0, P(A_\mathbf{i})]) 
        = \sum_{\mathbf{i} \in I} P(A_{\mathbf{i}})=1,     
      \end{split}
    \end{equation*}
    which concludes the proof.
\strut \hfill \qed
\end{enumerate}
\subsection*{{Appendix B: Step Function Representation, Plateau, and Edge}}\footnotesize
This appendix shows formally that the conquering function is a step function with a finite number of steps (provided that $K_1, \dots, K_d$ are finite), and it uses that representation so to formally define the plateau and the edge. As a consequence of \eqref{altdef} and of \eqref{trivia} in Appendix~A it holds that 
\begin{equation}\label{steprep}
  \begin{split}
    C(u) = \prod_{j = 1}^{d} K_j - \sum_{\mathbf{i} \in I} 1_{D_u}(\mathbf{i}) 
    = \prod_{j = 1}^{d} K_j - \sum_{\mathbf{i} \in I} 1_{[0, P(A_{\mathbf{i}}))}(u) 
    = \sum_{\mathbf{i} \in I} \{1 - 1_{[0, P(A_{\mathbf{i}}))}(u)\} 
    = \sum_{\mathbf{i} \in I} 1_{[P(A_{\mathbf{i}}), 1]}(u), 
  \end{split}
\end{equation}
where the final equality follows from the well-known property of the indicator, $1 - 1_B = 1_{B^c}$. Hence, Equation~\eqref{steprep} shows that $C(u)$ is a step function with a maximum of $|I| = \prod_{j = 1}^{d} K_j$ steps. Given this representation, it follows that 
\begin{equation*}
  \text{plateau} = \sup\bigg\{u: C(u) = \underset{\mathbf{i} \in I}{\max}[P(A_{\mathbf{i}}^c)]\bigg\}, \qquad
  \text{edge} = \arg \max_{u} \,\{C(u) - C(u^+)\}.
\end{equation*}
In words, the plateau is the value at which the longest plateau of $C(u)$ ends, and the edge is the value at which the largest jump on $C(u)$ occurs.

\noindent

\renewcommand\refname{{References}} 
\bibliographystyle{asa2}  
\bibliography{library.bib} 

\end{document}